\documentclass{article}

\usepackage{microtype}
\usepackage{graphicx}
\usepackage{subcaption}
\usepackage{booktabs} 
\usepackage{bm}
\usepackage[normalem]{ulem}

\usepackage{hyperref}

\usepackage[accepted]{icml2026}

\usepackage{amsmath}
\usepackage{amssymb}
\usepackage{mathtools}
\usepackage{amsthm}

\usepackage[capitalize,noabbrev]{cleveref}

\theoremstyle{plain}
\newtheorem{theorem}{Theorem}[section]

\newtheorem{lemma}[theorem]{Lemma}

\theoremstyle{definition}

\theoremstyle{remark}

\usepackage[textsize=tiny]{todonotes}

\icmltitlerunning{Aitchison Embeddings for Learning Compositional Graph Representations}

\begin{document}

\twocolumn[
  \icmltitle{Aitchison Embeddings for Learning Compositional Graph Representations}



  \icmlsetsymbol{equal}{*}

  \begin{icmlauthorlist}
    \icmlauthor{Nikolaos Nakis}{u1}
    \icmlauthor{Chrysoula Kosma}{u3}
    \icmlauthor{Panagiotis Promponas}{u2}
    \icmlauthor{Michail Chatzianastasis}{u4}
    \icmlauthor{Giannis Nikolentzos}{u5}
  \end{icmlauthorlist}

  \icmlaffiliation{u1}{Human Nature Lab, Yale University, New Haven, USA}
  \icmlaffiliation{u2}{Yale Institute for Network Science, Yale University, New Haven, USA}
 \icmlaffiliation{u3}{Université Paris Saclay, Université Paris Cité, ENS Paris Saclay, CNRS, SSA, INSERM, Centre Borelli, Gif-sur-Yvette, France}
  \icmlaffiliation{u4}{École Polytechnique, Institut Polytechnique de Paris, Palaiseau, France}
   \icmlaffiliation{u5}{Department of Informatics and Telecommunications, University of Peloponnese, Peloponnese, Greece}

  \icmlcorrespondingauthor{Nikolaos Nakis}{nicolaos.nakis@gmail.com}
  
  \icmlkeywords{Machine Learning, Compositional Data, Graph Representation Learning, Explainable Embeddings, Aitchison Geometry, ICML}

  \vskip 0.3in
]



\printAffiliationsAndNotice{}  

\begin{abstract}

Representation learning is central to graph machine learning, powering tasks such as link prediction and node classification. However, most graph embeddings are hard to interpret, offering limited insight into how learned features relate to graph structure. Many networks naturally admit a role-mixture view, where nodes are best described as mixtures over latent archetypal factors. Motivated by this structure, we propose a compositional graph embedding framework grounded in Aitchison geometry, the canonical geometry for comparing mixtures. Nodes are represented as simplex-valued compositions and embedded via isometric log-ratio (ILR) coordinates, which preserve Aitchison distances while enabling unconstrained optimization in Euclidean space. This yields intrinsically interpretable embeddings whose geometry reflects relative trade-offs among archetypes and supports coherent behavior under component restriction; we consider both fixed and learnable ILR bases. Across node classification and link prediction, our method achieves competitive performance with strong baselines while providing explainability by construction rather than post-hoc. Finally, subcompositional coherence enables principled component restriction: removing and renormalizing subsets preserves a well-defined geometry, which we exploit via subcompositional dimensionality removal to probe how archetype groups influence representations and predictions. \textit{Code is available here:} \url{https://github.com/Nicknakis/AICoG}

\end{abstract}

\section{Introduction}
Graph-structured data is ubiquitous across a wide range of application domains, including social networks, telecommunication networks and chemo-informatics. To analyze graph-structured data and uncover meaningful patterns, it is often necessary to apply machine learning methods~\cite{chami2022machine}. Central to this process is learning high-quality representations of different components of graphs, such as their nodes~\cite{zhou2022network}.

A large body of prior work has focused on learning node embeddings that preserve structural proximity in graphs. Early approaches such as DeepWalk~\cite{perozzi2014deepwalk} and node2vec~\cite{grover2016node2vec} adapt techniques from word embedding to random walks, while more recent methods incorporate architectural advances such as attention mechanisms~\cite{abu2018watch}, multi-context representations~\cite{epasto2019single}, task-specific objectives~\cite{sun2019vgraph,duong2023deep}, or non-Euclidean latent spaces~\cite{nickel2017poincare,nickel2018learning}. While these methods achieve strong predictive performance, their latent representations typically lack intrinsic interpretability: similarity is encoded geometrically, but distances and directions are not endowed with semantics that support explanation beyond post-hoc analysis, particularly when the latent structure is continuous or overlapping.

Beyond homophily-based similarity, many networks are more naturally characterized by the roles that nodes play. From this perspective, nodes are similar if they exhibit comparable structural or functional behavior, even when they are not adjacent inside the graph. Classical role-based formulations originate from structural equivalence~\cite{white_roles} and stochastic block models (SBMs)~\cite{HOLLAND1983109}, where roles correspond to equivalence classes defined by interaction patterns. Mixed-membership extensions allow nodes to participate in multiple roles, but still assume discrete, identifiable, and axis-aligned latent components. As a consequence, existing role-based embedding methods such as RolX~\cite{henderson2012rolx}, struc2vec~\cite{ribeiro2017struc2vec}, and GraphWave~\cite{donnat2018} provide limited intrinsic interpretability when the role structure varies continuously, overlaps across nodes, or is not well approximated by discrete blocks.

In many real-world networks, roles are inherently relative and context-dependent: nodes are distinguished not by absolute membership in a single role, but by how they trade off multiple latent factors. Standard Euclidean embeddings can represent such a variation, but their geometry does not impose semantics on relative differences between dimensions, making interpretation sensitive to arbitrary coordinate choices. Mixed-membership models allow overlap, but remain tied to axis-aligned latent roles, which is restrictive when role structure is continuous or non-identifiable. These limitations motivate graph embedding frameworks whose latent geometry is explicitly aligned with compositional structure, where similarity depends on relative proportions rather than absolute magnitudes.

Building on this observation, we propose Aitchison Compositional Graph embeddings (AICoG), a role-based graph embedding framework that represents nodes as compositions over latent archetypal factors and compares them using Aitchison geometry~\cite{aitchison1982statistical}, the canonical geometry for compositional data. Node compositions are embedded via isometric log-ratio (ILR) coordinates, which preserve Aitchison distances while enabling unconstrained optimization in Euclidean space. In contrast to axis-aligned role assignments, AICoG explains roles as regions of a compositional latent space, characterized by stable relative trade-offs among archetypes rather than by individual latent dimensions.

Our contributions are summarized as follows: \textbf{(i)} We introduce Aitchison Compositional Graph embeddings (AICoG), a graph embedding framework that models nodes as compositions and compares them using Aitchison geometry. \textbf{(ii)} We show that embedding compositions via isometric log-ratio (ILR) coordinates yield a latent distance model that is expressively equivalent to Euclidean latent distance models, while endowing distances with principled, invariant semantics grounded in relative trade-offs. \textbf{(iii)} We propose a geometric notion of roles, where roles correspond to regions of a compositional latent space rather than axis-aligned latent dimensions, enabling intrinsic interpretability under continuous and overlapping role structure. \textbf{(iv)} We demonstrate that subcompositional coherence enables principled component restriction: removing and renormalizing subsets of archetypal factors preserves a well-defined geometry, which we exploit to analyze component influence and stability. \textbf{(v)} Through extensive experiments on link prediction and node classification, we show that AICoG achieves competitive predictive performance with strong baselines while providing explainability by construction rather than post-hoc attribution. 

\section{Related Work}

\textbf{Graph representations in Euclidean spaces.}
Many node embedding methods learn representations in unconstrained Euclidean space to preserve structural proximity. These include spectral approaches such as Laplacian Eigenmaps~\citep{belkin2001laplacian}, random-walk-based methods such as DeepWalk~\citep{perozzi2014deepwalk} and node2vec~\citep{grover2016node2vec}, and related proximity-preserving techniques such as LINE~\citep{tang2015line}. More recently, graph neural networks (GNNs) learn Euclidean embeddings through message passing and neighborhood aggregation~\citep{kipf2016semi,hamilton2017inductive,velickovic2017graph}. Although effective for prediction, these approaches provide limited intrinsic interpretability of the learned latent geometry.

\textbf{Role-based and block-structured graph models.}
Roles in networks have traditionally been studied through the notions of structural and stochastic equivalence. Classical models such as the stochastic block model (SBM)~\citep{HOLLAND1983109} and its mixed-membership extensions \cite{airoldi2007mixedmembershipstochasticblockmodels,jin2022mixedmembershipestimationsocial} represent nodes as distributions over discrete roles or communities, with interactions governed by role--role connectivity patterns. These models provide interpretable role assignments, but rely on identifiable, axis-aligned latent roles, and are best suited to block-structured networks. Structural role embedding methods, including RolX~\citep{henderson2012rolx}, struc2vec~\citep{ribeiro2017struc2vec}, Role2Vec~\citep{role2vec}, and GraphWave~\citep{donnat2018}, aim to capture regular equivalence by embedding nodes with similar structural signatures nearby. While effective for capturing regular equivalence, much prior work implicitly treats roles as discrete types or identifiable axes, i.e., as pure archetypes that nodes approximate or mix over. Our approach instead models roles as continuous, overlapping mixtures and explains similarity through the geometry of relative trade-offs, rather than through axis-aligned role assignments.

\textbf{Explainability in graph representation learning.}
Explainability in graph learning is often addressed via post-hoc attribution methods~\citep{ying2019gnnexplainer,yuan2020xgnn}. Several works have targeted intrinsic interpretability via architectural bias, including attention mechanisms~\citep{pope2019explainability}, disentangled representations~\citep{baldassarre2019explainability}, and prototype- or motif-based models~\citep{lin2020graph,zhang2022protgnn,ying2021transformers}. Related ideas appear in signed and generative graph models~\citep{nakis2025signeda,nakis2025signedb}. These approaches primarily explain predictions or latent factors in Euclidean space, whereas our work endows the latent geometry itself with semantics tailored to compositional structure.

\section{Proposed Method}

\textbf{Preliminaries.}
Let $\mathcal{G}=(V,E)$ be a simple undirected graph with $N=|V|$ and adjacency matrix
$\bm{Y}\in\{0,1\}^{N\times N}$, where $\bm{Y}_{ij}=\bm{Y}_{ji}$ and $\bm{Y}_{ii}=0$.
Bold uppercase letters such as $\bm{X}$ denote matrices, bold lowercase letters such as $\bm{x}$ denote vectors, and non-bold letters such as $x$ denote scalars. Throughout this work, we adopt a geometric view of interpretability, where semantic meaning is attributed to distances, directions, and regions of the latent space, rather than to individual coordinate axes. We next propose Aitchison Compositional Graph embeddings (AICoG), which models each node as a composition over $K$ latent archetypal components and induces an interpretable latent geometry in Aitchison space (see Figure \ref{fig:aicog_overview} for an overview). Proofs regarding theorems and lemmas are provided in the Appendix.

\textbf{Compositional data and geometry.}
Compositional data describe relative allocations of a whole, where only the ratios between components
are meaningful and the absolute scale does not carry any information~\cite{aitchison1982statistical}.
Such data are naturally represented on the interior simplex
\begin{equation}
\Delta^{K-1}
:= \left\{ \bm{z}\in\mathbb{R}^K_{>0}
\;\middle|\;
\sum_{k=1}^K z_k = 1 \right\}.
\end{equation}
Equivalently, any two positive vectors in $\mathbb{R}^K_{>0}$ that differ only by a global scaling
represent the same composition, since closure rescales them to sum to one.
Consequently, valid analyses must be invariant to scale, respect relative information, and be coherent
under subcompositions (choosing a subset of components and reclosing to the simplex).
The canonical geometry for compositional data is the Aitchison geometry, which endows the simplex
with a Euclidean structure via log-ratios~\cite{aitchison1982statistical}; distances and operations then
depend exclusively on relative relationships between components.


\begin{figure*}[t]
    \centering
    \includegraphics[width=.9\linewidth]{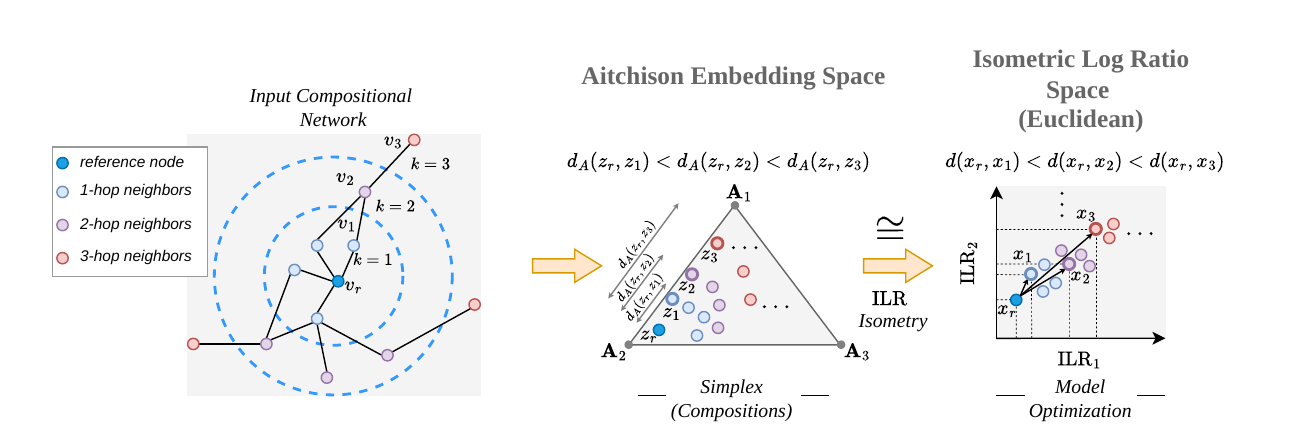}
    \caption{\textbf{Overview of AICoG.} Nodes are represented as compositions on the simplex and compared in Aitchison geometry. An ILR isometry maps compositions to an unconstrained Euclidean space, where distances preserve Aitchison distances.}
    \label{fig:aicog_overview}
\end{figure*}





\textbf{Compositional Graph Representation Learning.} We consider graphs whose latent structure is compositional: each node $i$ is associated with a graded, overlapping mixture over $K$ latent archetypal factors. Concretely, we represent each node by a composition $\mathbf{z}_i \in \Delta^{K-1}$. Our inductive bias is that edge formation (i.e., the probability of an edge $(i,j)\in E$) is governed by the similarity of these relative mixtures, the ratios between archetypal weights, rather than by absolute latent scale or magnitude (e.g., the norm of an unconstrained embedding).

Accordingly, each node $i\in V$ is associated with a latent composition of $K$ components
\begin{equation}
\bm{z}_i = (z_{i1}, \dots, z_{iK}) \in \Delta^{K-1},
\end{equation}
where each dimension $k$ corresponds to a latent archetypal factor, and $z_{ik}$ denotes the relative contribution of archetype $k$ to node $i$.
The simplex $\Delta^{K-1}$ thus parameterizes nodes' roles as full compositions over archetypal factors.


Under this interpretation, simplex vertices correspond to idealized pure archetypes, in which a node is dominated by a single latent factor, while interior points represent graded, mixed roles. In many graphs, absolute interaction volume is a nuisance factor: for example, two nodes may differ greatly in degree or activity while exhibiting the same relative pattern of interactions across archetypal behaviors. When node roles are compositional, meaningful similarity should therefore depend on relative trade-offs between archetypes (e.g., how interactions are distributed across patterns) rather than absolute magnitudes (e.g., how many interactions occur). To respect these semantics, we represent nodes' roles as compositions and endow $\Delta^{K-1}$ with the Aitchison geometry, the canonical framework for comparing relative mixtures. This geometry is scale invariant and subcompositionally coherent, ensuring that restricting attention to any subset of archetypal factors preserves their relative relationships. As a result, latent similarity reflects relative archetypal dominance rather than overall activity, naturally supporting graded and overlapping node roles.

Our objective is to learn a latent geometry on the simplex that captures similarity between node roles in terms of relative archetypal composition. We do not assume that nodes correspond to pure archetypes; instead, roles are expressed as graded mixtures with overlapping structure. In the following section, we show how this geometry can be isometrically embedded in Euclidean space via log-ratio coordinates, enabling efficient optimization through unconstrained gradient-based methods in $\mathbb{R}^{K-1}$ while exactly preserving the Aitchison geometry of the simplex.

\textbf{Aitchison Geometry and ILR Coordinates.}
The simplex $\Delta^{K-1}$ endowed with the Aitchison geometry is not a Euclidean space: compositional observations are identifiable only up to a positive scaling, so only relative information (ratios between components) is meaningful. The Aitchison geometry therefore defines distances and inner products in terms of (log-)ratios, yielding a scale-invariant notion of similarity on $\Delta^{K-1}$. 

In particular, the Aitchison distance between two compositions
$\mathbf{z}_i,\mathbf{z}_j \in \Delta^{K-1}$ is defined as
\[
d_A(\mathbf{z}_i,\mathbf{z}_j)
=
\left[
\frac{1}{2K}
\sum_{r=1}^{K}\sum_{s=1}^{K}
\left(
\log\frac{z_{ir}}{z_{is}}
-
\log\frac{z_{jr}}{z_{js}}
\right)^2
\right]^{1/2},
\]
and therefore depends only on differences between component log-ratios.

To enable efficient optimization using standard Euclidean tools while preserving this geometry, we embed compositions into $\mathbb{R}^{K-1}$ via the isometric log-ratio (ILR) transformation.

Let $\mathbf{1}\in\mathbb{R}^K$ denote the all-ones vector, and define the contrast space
\begin{equation}
\mathcal{C}
=
\left\{
\mathbf{v}\in\mathbb{R}^K
\;\middle|\;
\mathbf{v}^\top \mathbf{1}=0
\right\},
\end{equation}
which contains all valid log-ratio contrasts between archetypal factors (see Appendix). The contrast space $\mathcal{C}$ is a $(K\!-\!1)$-dimensional linear subspace of $\mathbb{R}^K$; the ILR transform provides Euclidean coordinates for elements of $\mathcal{C}$ in $\mathbb{R}^{K-1}$. Let $\mathbf{V}\in\mathbb{R}^{K\times(K-1)}$ be any matrix whose columns form an orthonormal basis of $\mathcal{C}$, satisfying
\begin{equation}
\mathbf{V}^\top \mathbf{V} = \mathbf{I},
\qquad
\mathbf{V}^\top \mathbf{1} = \mathbf{0},
\end{equation}
where $\mathbf{I}\in\mathbb{R}^{(K-1)\times(K-1)}$ is the identity matrix. 
Given a node role $\mathbf{z}_i\in\Delta^{K-1}$, its ILR coordinates are defined as
\begin{equation}
\label{eq:ilr_}
\mathbf{x}_i
=
\mathrm{ILR}(\mathbf{z}_i)
=
\log(\mathbf{z}_i)^\top \mathbf{V}
\in\mathbb{R}^{K-1},
\end{equation}
where the logarithm is applied elementwise. Each coordinate of $\mathbf{x}_i$ corresponds to a contrast between groups of latent archetypal factors, i.e., a log-ratio comparing their relative contributions, encoding relative dominance rather than absolute membership \citep{aitchison1982statistical,egozcue}.

Importantly, the Aitchison distance between two compositions $\mathbf{z}_i,\mathbf{z}_j\in\Delta^{K-1}$ can now be defined as the Euclidean distance between their ILR representations \citep{aitchison1982statistical}: 

\begin{equation}
d_A(\mathbf{z}_i,\mathbf{z}_j)
=
\left\|\left(\mathrm{ILR}(\mathbf{z}_i)-\mathrm{ILR}(\mathbf{z}_j)\right)
\right\|_2,
\end{equation}
A key property of the ILR transformation is that it is an isometry
between $(\Delta_\circ^{K-1}, d_A)$ and $\mathbb{R}^{K-1}$.
Consequently, letting
$\mathbf{x}_i=\operatorname{ILR}(\mathbf{z}_i)$,
Euclidean distances between ILR embeddings exactly preserve
Aitchison distances, and therefore similarity in relative
archetypal composition.

Log-ratio coordinates are not unique: the additive log-ratio (ALR), the centered log-ratio (CLR), and the isometric log-ratio (ILR) transform all map compositions into real-valued vector spaces \cite{greenacre2023aitchisonscompositionaldataanalysis}. We prefer ILR because it provides an orthonormal coordinate system in $\mathbb{R}^{K-1}$ that is distance-preserving with respect to the Aitchison metric. In contrast, ALR depends on an arbitrary reference component and is not an isometry, while CLR produces coordinates in a constrained (sum-to-zero) subspace of $\mathbb{R}^K$. Any two valid ILR bases are related by an orthogonal transformation and therefore induce identical distances and likelihoods, decoupling geometric structure from coordinate representation.

\begin{lemma}[Subcomposition corresponds to a projection of ILR embeddings]
\label{lem:subcomposition_projection}
Let $\mathbf{z}_i \in \Delta^{K-1}_\circ$ be compositional embeddings and let
$S \subset \{1,\dots,K\}$ with $|S|\ge 2$.
Define the re-closed subcomposition
\[
\mathbf{z}_i^{(S)} = \mathcal{C}\big((z_{i,r})_{r\in S}\big) \in \Delta^{|S|-1}_\circ ,
\]
where $\mathcal{C}$ denotes closure.
Let $\operatorname{ILR}$ and $\operatorname{ILR}_S$ be ILR transforms on
$\Delta^{K-1}_\circ$ and $\Delta^{|S|-1}_\circ$, respectively.

Then there exists a linear map
$P_S \in \mathbb{R}^{(|S|-1)\times (K-1)}$
with orthonormal rows such that for all $i,j$,
\[
\begin{aligned}
\big\|
\operatorname{ILR}_S(\mathbf{z}_i^{(S)})
-
\operatorname{ILR}_S &(\mathbf{z}_j^{(S)})
\big\|_2 =
\\
&\qquad 
\big\|
P_S\big(
\operatorname{ILR}(\mathbf{z}_i)
-
\operatorname{ILR}(\mathbf{z}_j)
\big)
\big\|_2 .
\end{aligned}
\]
\end{lemma}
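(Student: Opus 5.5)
I will prove the statement by an explicit construction of $P_S$ from the two ILR bases. The argument passes through centered log-ratio (CLR) coordinates. Write $\mathbf{V}\in\mathbb{R}^{K\times(K-1)}$ for the basis defining $\operatorname{ILR}$ and $\mathbf{V}_S\in\mathbb{R}^{|S|\times(|S|-1)}$ for the basis defining $\operatorname{ILR}_S$. Both have orthonormal columns that are orthogonal to the all-ones vector of the appropriate length. Let $E_S\in\mathbb{R}^{|S|\times K}$ be the coordinate selection matrix that picks the rows indexed by $S$.

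\textbf{Step 1: rewrite the sub-ILR in terms of the full log vector.} Closure rescales $(z_{i,r})_{r\in S}$ by a positive scalar $c_i$. Hence $\log \mathbf{z}_i^{(S)} = E_S\log\mathbf{z}_i - (\log c_i)\mathbf{1}_{|S|}$. Since $\mathbf{V}_S^\top\mathbf{1}_{|S|}=\mathbf{0}$, the shift is annihilated, and
\[
\operatorname{ILR}_S(\mathbf{z}_i^{(S)})^\top = \mathbf{V}_S^\top E_S\log\mathbf{z}_i .
\]
The same scale-invariance lets me replace $\log\mathbf{z}_i$ by its centered version $\mathbf{u}_i=(\mathbf{I}-\tfrac1K\mathbf{1}\mathbf{1}^\top)\log\mathbf{z}_i\in\mathcal{C}$. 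This works because $\mathbf{V}_S^\top E_S\mathbf{1}_K=\mathbf{V}_S^\top\mathbf{1}_{|S|}=\mathbf{0}$.

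\textbf{Step 2: invert the full ILR on $\mathcal{C}$.} The columns of $\mathbf{V}$ form an orthonormal basis of $\mathcal{C}$, so $\mathbf{V}\mathbf{V}^\top$ is the orthogonal projector onto $\mathcal{C}$. It follows that $\mathbf{u}_i=\mathbf{V}\mathbf{V}^\top\log\mathbf{z}_i=\mathbf{V}\,\operatorname{ILR}(\mathbf{z}_i)^\top$. Combining with Step 1 gives
\[
\operatorname{ILR}_S(\mathbf{z}_i^{(S)})^\top = P_S\,\operatorname{ILR}(\mathbf{z}_i)^\top,
\qquad P_S:=\mathbf{V}_S^\top E_S\mathbf{V}\in\mathbb{R}^{(|S|-1)\times(K-1)} .
\]
This is linear, so the displayed distance identity for differences follows immediately, even as an exact vector identity.

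\textbf{Step 3: orthonormal rows.} This is the only step with content, and I expect it to be the main obstacle. We need $P_SP_S^\top=\mathbf{V}_S^\top E_S\mathbf{V}\mathbf{V}^\top E_S^\top\mathbf{V}_S=\mathbf{I}$. Substitute $\mathbf{V}\mathbf{V}^\top=\mathbf{I}_K-\tfrac1K\mathbf{1}_K\mathbf{1}_K^\top$ and use $E_SE_S^\top=\mathbf{I}_{|S|}$ and $E_S\mathbf{1}_K=\mathbf{1}_{|S|}$. Then
\[
E_S\mathbf{V}\mathbf{V}^\top E_S^\top=\mathbf{I}_{|S|}-\tfrac1K\mathbf{1}_{|S|}\mathbf{1}_{|S|}^\top .
\]
Sandwiching by $\mathbf{V}_S$ kills the rank-one term because $\mathbf{V}_S^\top\mathbf{1}_{|S|}=\mathbf{0}$. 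What remains is $\mathbf{V}_S^\top\mathbf{V}_S=\mathbf{I}_{|S|-1}$, as required.

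The potential pitfall is that $E_S$ by itself does not map $\mathcal{C}$ into the sum-zero subspace of $\mathbb{R}^{|S|}$. So one must check that the rank-one correction is removed by $\mathbf{V}_S$. It is, which is why the construction works for arbitrary fixed bases $\mathbf{V}$ and $\mathbf{V}_S$. Geometrically, $P_S^\top$ is an isometric embedding of $\mathbb{R}^{|S|-1}$ into $\mathbb{R}^{K-1}$, and $P_S$ is the orthogonal projection onto its range. Finally, the hypothesis $|S|\ge2$ is needed only so that $\Delta^{|S|-1}_\circ$ and $\mathbf{V}_S$ are nontrivial.
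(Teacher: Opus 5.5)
Your proposal is correct and follows essentially the same route as the paper: you annihilate the closure shift using $\mathbf{V}_S^\top\mathbf{1}_{|S|}=\mathbf{0}$, obtain the identical map $P_S=\mathbf{V}_S^\top E_S\mathbf{V}$ (the paper's $V_S^\top R_S V$), and verify $P_SP_S^\top=\mathbf{I}$. The only cosmetic difference is in the orthonormality check: the paper observes that the padded matrix $E_S^\top\mathbf{V}_S$ has orthonormal columns lying in $\mathcal{C}$, so the projector $\mathbf{V}\mathbf{V}^\top$ fixes it; you instead expand $\mathbf{V}\mathbf{V}^\top=\mathbf{I}_K-\tfrac1K\mathbf{1}\mathbf{1}^\top$ and let $\mathbf{V}_S$ kill the rank-one term explicitly.
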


\textbf{Node Representation.} To learn simplex-valued node roles while enabling unconstrained optimization for \textsc{AICoG}, we parameterize each composition $\mathbf{z}_i\in\Delta^{K-1}$ via an unconstrained vector (logits) $\tilde{\mathbf{z}}_i\in\mathbb{R}^K$ using a row-wise softmax transformation,
$
\mathbf{z}_i = \mathrm{softmax}(\tilde{\mathbf{z}}_i).
$
Given $\mathbf{z}_i\in\Delta^{K-1}$, we compute its ILR embedding $\mathbf{x}_i=
\mathrm{ILR}(\mathbf{z}_i)$ via Eq. \eqref{eq:ilr_}.

\textbf{Edge likelihood.}
We consider binary undirected graphs and model edges using a Bernoulli likelihood. Let $\mathbf{x}_i\in\mathbb{R}^{K-1}$ denote the ILR embedding of node $i$, and let $\gamma_i\in\mathbb{R}$ be a node-specific bias that captures degree heterogeneity. 

For an unordered node pair $(i,j)$ with $i<j$, we define the log-odds
\begin{equation}
\label{eq:log-odds}
\eta_{ij}
=
- \| \mathbf{x}_i - \mathbf{x}_j \|_2
+
\gamma_i + \gamma_j,
\end{equation}
so that the Bernoulli edge probability is a monotone decreasing function of the ILR distance (equivalently, the Aitchison distance). This makes \textsc{AICoG} essentially a latent distance model~\cite{Hoff01122002} under Aitchison geometry.

The model is trained by maximizing the Bernoulli log-likelihood over observed edges. 

The complete log-likelihood over unordered pairs $\mathcal{D}=\{(i,j):\,1\le i<j\le N\}$ is
\begin{equation}
\label{eq:loglik}
\log p(\mathbf{Y}\mid \{\mathbf{X}\}, \{\bm{\gamma}\})
=
\sum_{i<j}
\left[
Y_{ij}\,\eta_{ij}
-
\log\!\left(1+\exp(\eta_{ij})\right)
\right].
\end{equation}
Importantly, the model in Eq.~\eqref{eq:log-odds} has the same expressive power as an unconstrained Euclidean latent distance model in $\mathbb{R}^{K-1}$ (Theorem~\ref{thm:expressive_equivalence}). Consequently, imposing a compositional inductive bias does not restrict the class of edge--probability matrices that can be represented in principle. Even when the assumption of compositional node roles is weak or absent, the proposed formulation is therefore no less expressive than a standard Euclidean latent distance model. Our goal is not to improve raw expressivity over Euclidean embeddings, but to endow latent distances with principled, invariant semantics when node roles are compositional, at no additional expressive cost.

\begin{theorem}[Expressive equivalence of ILR-compositional latent distance models]
Let $K \geq 2$, let $g:[0,\infty)\to\mathbb{R}$, and let
$\sigma:\mathbb{R}\to(0,1)$ be a link function. Let
$\Delta_\circ^{K-1}$ denote the open simplex, and let
$V\in\mathbb{R}^{K\times(K-1)}$ be any fixed valid ILR basis satisfying
$V^\top V=I_{K-1}$ and $V^\top\mathbf{1}=0$. Let
$\operatorname{ILR}_V:\Delta_\circ^{K-1}\to\mathbb{R}^{K-1}$ denote
the corresponding isometric bijection.

Consider the latent distance model with latent positions
$\mathbf{z}_i\in\Delta_\circ^{K-1}$:
\[
\mathbb{P}(A_{ij}=1)
=
\sigma\!\left(
\alpha -
g\!\left(
\left\|
\operatorname{ILR}_V(\mathbf{z}_i)
-
\operatorname{ILR}_V(\mathbf{z}_j)
\right\|_2
\right)
\right).
\]

Then the set of edge--probability matrices realizable by this model is
identical to that of the Euclidean latent distance model
\[
\mathbb{P}(A_{ij}=1)
=
\sigma\!\left(
\alpha -
g\!\left(
\|\mathbf{x}_i-\mathbf{x}_j\|_2
\right)
\right),
\qquad
\mathbf{x}_i\in\mathbb{R}^{K-1}.
\]
\label{thm:expressive_equivalence}
\end{theorem}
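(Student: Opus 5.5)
The plan is to prove the equality of the two realizable sets by double inclusion. The only substantive ingredient is that $\operatorname{ILR}_V:\Delta_\circ^{K-1}\to\mathbb{R}^{K-1}$ is a bijection, so that latent positions can be transported in both directions without changing any pairwise distance. Concretely, fix $N$ nodes, the offset $\alpha$, and the functions $g$ and $\sigma$. An edge-probability matrix is then a function only of the $N$ pairwise Euclidean distances among the latent coordinates. Hence it suffices to show that the sets of admissible $N$-tuples of coordinates coincide.

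For the inclusion ``compositional $\subseteq$ Euclidean'', take any compositions $\mathbf{z}_1,\dots,\mathbf{z}_N\in\Delta_\circ^{K-1}$ and set $\mathbf{x}_i=\operatorname{ILR}_V(\mathbf{z}_i)$. The two model formulas then agree entry by entry, with the same $\alpha$.

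For the reverse inclusion, take arbitrary $\mathbf{x}_1,\dots,\mathbf{x}_N\in\mathbb{R}^{K-1}$. We need preimages, and the plan is to write down the inverse explicitly: $\mathbf{z}_i=\mathcal{C}(\exp(V\mathbf{x}_i))$, i.e. $\mathrm{softmax}(V\mathbf{x}_i)$. This is strictly positive and sums to one, so it lies in $\Delta_\circ^{K-1}$.

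The step that needs the most care, though it remains routine, is verifying $\operatorname{ILR}_V(\mathbf{z}_i)=\mathbf{x}_i$. Write $\log\mathbf{z}_i = V\mathbf{x}_i - c_i\mathbf{1}$, where $c_i=\log\sum_k \exp((V\mathbf{x}_i)_k)$. Applying $V^\top$ and using $V^\top V=I_{K-1}$ together with $V^\top\mathbf{1}=0$ gives $V^\top\log\mathbf{z}_i=\mathbf{x}_i$, which is exactly Eq.~\eqref{eq:ilr_} in column form.

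This verification also shows that the closure constant, i.e. the scale ambiguity of compositions, is annihilated by $V$. That fact is what makes the map well defined and injective. Injectivity follows because if $V^\top\log\mathbf{z}=V^\top\log\mathbf{z}'$, then $\log\mathbf{z}-\log\mathbf{z}'$ lies in $\mathcal{C}^\perp=\operatorname{span}(\mathbf{1})$. So $\mathbf{z}$ and $\mathbf{z}'$ differ by a scaling, and since both are closed, $\mathbf{z}=\mathbf{z}'$.

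With the bijection in hand, the distances $\|\operatorname{ILR}_V(\mathbf{z}_i)-\operatorname{ILR}_V(\mathbf{z}_j)\|_2$ equal $\|\mathbf{x}_i-\mathbf{x}_j\|_2$ for all pairs. So every Euclidean edge-probability matrix is realized by the compositional model, which closes the argument.

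Finally, the argument uses nothing about $g$, $\sigma$, $\alpha$, or which valid basis $V$ is chosen. The same reasoning therefore extends verbatim to the node-specific biases $\gamma_i$ of Eq.~\eqref{eq:log-odds}.
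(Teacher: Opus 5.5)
Your proposal is correct and follows essentially the same route as the paper's proof: the explicit inverse $\mathbf{z}=\exp(V\mathbf{x})/\mathbf{1}^\top\exp(V\mathbf{x})$ checked via $V^\top V=I_{K-1}$ and $V^\top\mathbf{1}=0$, injectivity via $\log\mathbf{z}-\log\mathbf{z}'\in\operatorname{span}(\mathbf{1})$ plus closure, and then entrywise substitution of $\mathbf{x}_i=\operatorname{ILR}_V(\mathbf{z}_i)$ into the two probability formulas. Two minor points: the edge-probability matrix depends on the $N(N-1)/2$ pairwise distances, not on $N$ of them, and for the set equality only surjectivity is needed, so the injectivity step, which you and the paper both include, is not essential.
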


 The ILR transformation depends on the choice of an orthonormal ILR basis $\mathbf{V}$ of the contrast space $\mathcal{C}=\{\mathbf{v}\in\mathbb{R}^K:\mathbf{v}^\top\mathbf{1}=0\}$. While all valid bases induce identical Aitchison distances, different choices correspond to different systems of log-ratio contrasts among latent archetypal factors and therefore affect interpretability. A common default is the Helmert basis, which defines simple hierarchical contrasts and is domain-agnostic and computationally efficient. Alternatively, sequential binary partition (SBP) bases define contrasts via recursive binary splits, yielding interpretable balances when domain structure is known, but requiring a predefined partition. Importantly, to enable data-driven and informative coordinate system for human interpretation of compositional contrasts, we introduce a learnable ILR basis (see Appendix) that is trained jointly with the node embeddings. A key advantage of working in Aitchison geometry is that interpretability is a property of the representation space itself, rather than of a particular coordinate system. Any orthonormal ILR basis yields valid log-ratio contrasts; different bases simply correspond to different but equivalent views of the same compositional representation.

\begin{table}[!t]
\begin{center}
\caption{Dataset statistics. $N$: number of nodes, $|E|$: number of edges, $K$: number of labels; ``---'' indicates unlabeled datasets.}
\label{tab:network_statistics}
\resizebox{0.4\textwidth}{!}{%
 \begin{tabular}{rcccccccc}\toprule
 &\textsl{LastFM}&\textsl{Citeseer} & \textsl{Cora} & \textsl{Dblp} & \textsl{AstroPh} & \textsl{GrQc} &  \textsl{HepTh} \\\midrule
$N$&7,624&3,327 & 2,708 & 27,199 & 17,903 & 5,242  & 8,638 \\
$|E|$&55,612&9,104 & 5,278 & 66,832 & 197,031 & 14,496 & 24,827  \\
$K$&14&6 & 7 & --- & --- & --- & --- \\\bottomrule
\end{tabular}%
}
\end{center}
\end{table}

\paragraph{Geometric roles and interpretability.}
In our framework, a role is a pattern of interaction behavior captured by
proximity in the learned latent geometry: nodes $i$ and $j$ occupy similar
roles when their representations are close in Aitchison geometry, i.e., when
$d_A(\mathbf{z}_i,\mathbf{z}_j)$ is small (equivalently
$\|\mathbf{x}_i-\mathbf{x}_j\|_2$ is small in ILR space). Roles are therefore
not associated with individual simplex components or coordinate axes; instead,
they correspond to regions of the latent space containing nodes with similar
relative archetypal compositions and interaction profiles. This yields
continuous, overlapping roles that are invariant to orthogonal
reparameterizations of the ILR coordinates, aligning with regular equivalence
rather than cohesive communities. AICoG provides geometric and compositional interpretability rather than
coordinate identifiability of a unique latent axis. 

This interpretability
appears at three levels. First, at the geometric level, distances correspond to
differences in relative archetypal mixtures: two nodes are close when their
log-ratio trade-offs among archetypal factors are similar. Second, under any
chosen ILR basis, coordinates provide balance-level views, where each balance
is a log-ratio contrast between groups of archetypes. These balances are not
unique, since valid ILR bases are related by orthogonal transformations, but
each basis yields a valid interpretable view of the same Aitchison geometry.
Third, at the component level, the compositional representation supports
subcompositional restriction: one can restrict attention to a subset of
archetypes, re-close the composition, and analyze how predictions or
separations change. Thus, interpretability is attached to the invariant
compositional geometry and to valid log-ratio views of that geometry, rather
than to a single privileged coordinate system.

This geometric definition also induces a distinct notion of explainability. Classical mixed-membership models (e.g., MMSBM) provide coordinate-level explanations in which each latent dimension is an identifiable role (up to permutation) and node vectors are membership weights along these axes. In contrast, because our likelihood depends only on distances in ILR space, the representation is orthogonally invariant and individual coordinates are not semantically identifiable; meaning is attributed to invariant geometric structure. 

Unlike general Euclidean embeddings, where geometry is identifiable but lacks intrinsic semantic grounding, Aitchison geometry endows distances with a compositional interpretation: they quantify differences in relative mixtures (log-ratio trade-offs) among latent archetypal factors. Operationally, each node is a composition $\mathbf{z}_i\in\Delta^{K-1}$ optimized via the ILR coordinates in $\mathbb{R}^{K-1}$; under a chosen ILR basis, coordinates correspond to balances (log-ratio contrasts), and subcompositional coherence ensures that the restriction to a subset of components and the application of closure remains well-defined (Lemma~\ref{lem:subcomposition_projection}). Consequently, proximity and predicted links admit direct explanations in terms of stable relative trade-offs, rather than arbitrary coordinate assignments.

\textbf{Computational complexity.} Naively, evaluating the Bernoulli--logistic edge likelihood scales as $\mathcal{O}(N^2)$ due to the all-pairs distance-matrix computation. To achieve scalability, we note that the first term of Eq.~\eqref{eq:loglik} depends only on the observed edges and can be computed in $\mathcal{O}(|E|)$ time. The log-partition term is approximated via uniform subsampling of non-edges, yielding an unbiased estimator. With a number of samples proportional to $|E|$, this reduces the per-iteration complexity to $\mathcal{O}(|E|)$.

\section{Results}
We extensively evaluate \textsc{AICoG} against prominent baseline graph representation learning methods, including both unconstrained and simplex-based representations, across networks of varying sizes and structures. We ran all methods using publicly available implementations (see Appendix). When GPU support was available, experiments were executed on an NVIDIA A100 GPU; otherwise, we used an Apple M2 machine with 8\,GB RAM. For \textsc{AICoG}, we optimize the Bernoulli negative log-likelihood in Eq.~\eqref{eq:loglik} using Adam~\citep{kingma2014adam} (learning rate $10^{-2}$) for $5{,}000$ iterations. Unless stated otherwise, the primary tuned hyperparameter is the embedding dimension $D$; for simplex-based methods with $K$ components, we report $D = K-1$ (the corresponding ILR dimension), and use the same $D$ convention for all methods.

\textbf{Datasets and Baselines.} We evaluate on citation, coauthorship, and social graphs, where edges are plausibly driven by multiple latent factors (e.g., topical affinity, methodology, or social proximity) rather than a single categorical role. Accordingly, we represent each node as a mixture over latent archetypes and learn embeddings that capture how these archetypal influences shape connectivity (Table~\ref{tab:network_statistics}). 
We consider the citation networks \textsl{Cora} and \textsl{Citeseer} \citep{cora}, the coauthorship networks \textsl{DBLP}, \textsl{AstroPh}, \textsl{GrQc}, and \textsl{HepTh} \citep{dblp,leskovec2007graph}, and the social network \textsl{LastFM} \citep{lastfm}. We compare against (i) Euclidean vector-space embedding methods, (ii) matrix factorization methods, and (iii) mixed-membership / simplex-based models.
Euclidean vector-space embeddings include \textsc{Node2Vec}~\citep{grover2016node2vec} and \textsc{Role2Vec}~\citep{role2vec}, which learn node representations in $\mathbb{R}^d$ and rely on Euclidean inner-product geometry.
Factorization-based methods include \textsc{NetMF}~\citep{netmf}.
Mixed-membership baselines include \textsc{MMSBM}~\citep{airoldi2007mixedmembershipstochasticblockmodels}, and \textsc{MNMF}~\citep{MNMF}. Latent distance Simplex-based baselines include \textsc{SLIM-Raa}~\citep{pmlr-v206-nakis23a} and \textsc{HM-LDM}~\citep{nakis2022hmldmhybridmembershiplatentdistance}.
Finally, we include \textsc{Simplex-Euclidean}, a latent distance model operating directly on the simplex with Euclidean geometry (without an ILR/Aitchison transformation), to isolate the effect of compositional geometry. We focus on featureless graphs and unsupervised representation learning, a regime in which message-passing GNNs are known to perform poorly without auxiliary features or supervision \cite{nikolentzos2024gnnsactuallylearnunderstanding} and thus are omitted.

\begin{table*}[!t]
\caption{AUC ROC scores for representation sizes of $8$, $16$, $32$, and $64$ averaged over five runs.}
\label{tab:AUC}

\centering
\resizebox{0.95\textwidth}{!}{%
\begin{tabular}{l*{20}{c}}
\toprule
 & \multicolumn{4}{c}{\textsl{AstroPh}}
 & \multicolumn{4}{c}{\textsl{GrQc}}
 & \multicolumn{4}{c}{\textsl{HepTh}}
 & \multicolumn{4}{c}{\textsl{Cora}}
 & \multicolumn{4}{c}{\textsl{DBLP}} \\
\cmidrule(lr){2-5}\cmidrule(lr){6-9}\cmidrule(lr){10-13}\cmidrule(lr){14-17}\cmidrule(lr){18-21}
\multicolumn{1}{l}{Dimension ($D$)} &
  8 & 16 & 32 & 64 &
  8 & 16 & 32 & 64 &
  8 & 16 & 32 & 64 &
  8 & 16 & 32 & 64 &
  8 & 16 & 32 & 64 \\
\midrule
\textsc{Node2Vec}    & .943 & .954 & .961 & .962 & .928 & .932 & .937 & .936 & .879 & .882 & .888 & .892 & .761 & .760 & .766 & .777 & .920 & .923 & .931 & .941 \\
\textsc{Role2Vec}    & .957 & .969 & .970 & .965 & .927 & .936 & .934 & .934 & .897 & .907 & .902 & .895 & .769 & .767 & .759 & .752 & .940 & .952 & \uline{.943} & .944 \\
\textsc{NetMF}       & .904 & .928 & .946 & .955 & .835 & .882 & .882 & .883 & .778 & .797 & .802 & .793 & .698 & .675 & .674 & .654 & .791 & .817 & .829 & .842 \\
\midrule
\textsc{SLIM-Raa}    & \textbf{.965} & .970 & .969 & \uline{.969} & .940 & .943 & .947 & \uline{.949} & .902 & .914 & .919 & .920 & .797 & .782 & .784 & .789 & \textbf{.953} & \textbf{.959} & \textbf{.961} & \uline{.962} \\
\textsc{MMSBM}       & .886 & .897 & .907 & .913 & .813 & .814 & .806 & .817 & .773 & .756 & .751 & .757 & .667 & .679 & .667 & .659 & .779 & .776 & .780 & .777 \\
\textsc{MNMF}        & .877 & .916 & .938 & .954 & .881 & .905 & .918 & .918 & .810 & .844 & .864 & .875 & .694 & .718 & .700 & .680 & .859 & .898 & .919 & .931 \\
\textsc{HM-LDM}      & .940 & .941 & .944 & .950 & .937 & .939 & .940 & .944 & .876 & .876 & .882 & .886 & .808 & .809 & .800 & .806 & .890 & .876 & .891 & .921 \\
\textsc{Simplex-Euclidean} & .853 & .848 & .849 & .863 & .831 & .817 & .808 & .808 & .750 & .738 & .737 & .751 & .739 & .723 & .718 & .709 & .694 & .678 & .691 & .737 \\
\midrule
\textsc{AICoG (HB) SubComp} & \uline{.962} & .970 & \uline{.974} & \textbf{.976} & \uline{.948} & .955 & \uline{.959} & \textbf{.961} & \uline{.905} & .918 & .926 & \textbf{.929} & .807 & .829 & \uline{.849} & \uline{.851} & .946 & .956 & \textbf{.961} & \textbf{.963} \\
\textsc{AICoG (HB)}  & \textbf{.965} & \uline{.972} & \textbf{.975} & \textbf{.976} & \textbf{.953} & \uline{.958} & \textbf{.961} & \textbf{.961} & \textbf{.911} & \textbf{.927} & \uline{.928} & \textbf{.929} & \uline{.837} & \uline{.846} & \textbf{.850} & \uline{.851} & \uline{.949} & \uline{.958} & \textbf{.961} & \uline{.962} \\
\textsc{AICoG (LB)}  & \textbf{.965} & \textbf{.973} & \textbf{.975} & \textbf{.976} & \textbf{.953} & \textbf{.959} & \textbf{.961} & \textbf{.961} & \textbf{.911} & \uline{.926} & \textbf{.929} & \uline{.928} & \textbf{.839} & \textbf{.847} & \textbf{.850} & \textbf{.852} & \uline{.949} & \uline{.958} & \textbf{.961} & \textbf{.963} \\
\bottomrule
\end{tabular}
}
\end{table*}

\textbf{Link prediction.}
We follow the standard link prediction protocol of \citet{perozzi2014deepwalk,nakis2023hierarchicalblockdistancemodel}. 
For each dataset, we uniformly remove $50\%$ of the edges while ensuring that the remaining graph is connected. The removed edges, together with an equal number of randomly sampled non-edges, form the test set; embeddings are learned on the residual graph only. We evaluate on five benchmark networks over five random runs and multiple embedding dimensions $D\in\{8,16,32,64\}$. Performance is reported as AUC-ROC in Table~\ref{tab:AUC} (PR-AUC is deferred to the Appendix). Following \citet{grover2016node2vec}, for embedding methods we construct dyadic features using standard binary operators (average, Hadamard, weighted-$L_1$, weighted-$L_2$) and train an $L_2$-regularized logistic regression classifier. For likelihood-based models, we instead compute link probabilities directly from the learned log-odds $\eta_{ij}$, without an auxiliary classifier. We report three variants of our method: \textsc{AICoG (HB)} (fixed Helmert ILR basis), \textsc{AICoG (LB)} (learned ILR basis), and \textsc{AICoG (HB) SubComp}, which evaluates subcompositional restriction of a trained model (details in Appendix). 

Results show that \textsc{AICoG} achieves on-par or favorable predictive performance relative to the strongest baselines. Replacing Aitchison geometry with Euclidean geometry on the simplex (\textsc{Simplex-Euclidean}) leads to a substantial performance drop, indicating that compositional geometry, rather than simplex constraints alone, is critical. Among mixed-membership baselines, \textsc{MMSBM} and \textsc{MNMF} underperform across most datasets. The strongest competitors are \textsc{SLIM-Raa} and \textsc{HM-LDM}, which increase expressivity via linear expansions of the simplex and occasionally achieve comparable performance. The subcompositional evaluation shows that removing components (with closure) preserves downstream utility under substantial compression, and performance is stable across different ILR bases, consistent with the orthogonal invariance of the Aitchison representation.

\textbf{Node classification.}
We follow the standard node classification protocol of~\citep{perozzi2014deepwalk} on labeled networks, using node embeddings as fixed features for a multinomial logistic regression classifier. Labeled nodes are split into training, validation, and test sets, with validation used to tune regularization. Table~\ref{tab:micro} reports average Micro-F1 scores over five runs using identical splits across methods (macro-F1 in the Appendix). \textsc{AICoG} achieves favorable or on-par performance relative to all baselines. As expected, Euclidean embeddings yield the strongest overall accuracy, while mixed-membership and simplex-based methods perform worse, with \textsc{SLIM-Raa} being the most competitive among them. Overall, \textsc{AICoG} maintains competitive predictive performance while yielding geometrically interpretable role representations by construction. Performance is stable across ILR bases, and subcompositional restriction preserves downstream utility without retraining.

\begin{table}[!t]
\centering

\caption{Micro-F1 scores for representation sizes of $8$, $16$, $32$, and $64$ averaged over five runs.}
\label{tab:micro}

\resizebox{\linewidth}{!}{%
\begin{tabular}{l*{12}{c}}
\toprule
 & \multicolumn{4}{c}{\textsl{Cora}}
 & \multicolumn{4}{c}{\textsl{Citeseer}}
 & \multicolumn{4}{c}{\textsl{LastFM}} \\
\cmidrule(lr){2-5}\cmidrule(lr){6-9}\cmidrule(lr){10-13}
\multicolumn{1}{l}{Dimension ($D$)} &
  8 & 16 & 32 & 64 &
  8 & 16 & 32 & 64 &
  8 & 16 & 32 & 64 \\
\midrule
\textsc{Node2Vec} &
.771 & .778 & .796 & .814 &
.610 & .658 & .669 & .695 &
\uline{.842} & .859 & .862 & \uline{.865} \\

\textsc{Role2Vec} &
\textbf{.783} & .796 & .799 & .803 &
\textbf{.677} & .693 & .694 & .706 &
.837 & .853 & .857 & .854 \\

\textsc{NetMF} &
.733 & .761 & .785 & .799 &
.612 & .646 & .671 & .696 &
.744 & .803 & .827 & .837 \\
\midrule

\textsc{Slim-Raa} &
.628 & .632 & .684 & .729 &
.564 & .583 & .580 & .630 &
.697 & .777 & .807 & .822 \\

\textsc{MMSBM} &
.313 & .406 & .534 & .614 &
.393 & .464 & .541 & .567 &
.259 & .405 & .499 & .673 \\

\textsc{MNMF} &
.516 & .612 & .678 & .722 &
.457 & .506 & .571 & .622 &
.661 & .734 & .780 & .796 \\

\textsc{HM-LDM} &
.697 & .718 & .777 & .814 &
.620 & .613 & .655 & .691 &
.799 & .801 & .817 & .832 \\

\textsc{Simplex-Euclidean} &
.442 & .457 & .485 & .442 &
.454 & .404 & .414 & .431 &
.440 & .455 & .478 & .548 \\
\midrule

\textsc{AICoG (HB) SubComp} &
.672 & .780 & .821 & .831 &
.577 & .668 & .713 & .736 &
.795 & .857 & .795 & .870 \\

\textsc{AICoG (HB)} &
.769 & \textbf{.806} & \textbf{.829} & \uline{.831} &
.671 & \uline{.702} & \textbf{.721} & \textbf{.736} &
\textbf{.853} & \uline{.867} & \uline{.869} & \textbf{.870} \\

\textsc{AICoG (LB)} &
\uline{.780} & \uline{.804} & \uline{.826} & \textbf{.833} &
\uline{.674} & \textbf{.704} & \uline{.717} & \uline{.733} &
\textbf{.853} & \textbf{.868} & \textbf{.870} & \textbf{.870} \\
\bottomrule
\end{tabular}%
}

\vspace{0.8em}

\caption{Synthetic membership recovery. Lower $\ell_1$ and JS divergence are
better; higher cosine similarity is better. AICoG is closer to the true
memberships across all regimes.}
\label{tab:synthetic_compact}

\resizebox{0.9\linewidth}{!}{%
\begin{tabular}{llccc}
\toprule
Generator / regime & Model &
$\ell_1 \downarrow$ &
Cosine $\uparrow$ &
JS $\downarrow$ \\
\midrule
Bilinear-continuous & AICoG & 0.922 & 0.624 & 0.161 \\
                    & MMSBM & 1.456 & 0.423 & 0.360 \\
\midrule
Bilinear-discrete   & AICoG & 1.082 & 0.619 & 0.257 \\
                    & MMSBM & 1.617 & 0.223 & 0.513 \\
\midrule
ILR-continuous      & AICoG & 0.900 & 0.645 & 0.154 \\
                    & MMSBM & 1.452 & 0.432 & 0.356 \\
\midrule
ILR-discrete        & AICoG & 1.383 & 0.372 & 0.407 \\
                    & MMSBM & 1.581 & 0.244 & 0.501 \\
\bottomrule
\end{tabular}%
}

\end{table}
\paragraph{Synthetic membership recovery.}
We also evaluate AICoG in a controlled synthetic setting where the true
memberships and edge-generation mechanism are known. We cross two membership
regimes, continuous/interior and near-discrete, with two edge generators:
an MMSBM-style bilinear model and an ILR-distance model. This separates the
effect of the latent membership regime from the edge model. We compare learned
memberships to the ground truth using $\ell_1$ distance, cosine similarity,
and Jensen--Shannon divergence. As summarized in Table~\ref{tab:synthetic_compact},
AICoG recovers memberships closer to the truth than MMSBM across all four
settings, with the largest gains in the continuous/interior regimes. MMSBM
tends to collapse toward near-discrete assignments even when the true
memberships are interior, while AICoG remains closer to the underlying
compositional structure.

\begin{table}[!t]
\centering
\caption{Membership interiority on Cora. AICoG learns substantially more
overlapping and interior compositions than MMSBM. The membership-probe results
show that this additional overlap is label-informative.}
\label{tab:interiority}
\resizebox{1\linewidth}{!}{
\begin{tabular}{lcccccc}
\toprule
Model & Entropy $\uparrow$ & Max comp. $\downarrow$ & Near-corner $\downarrow$
& Eff. roles $\uparrow$ & Probe Acc. $\uparrow$ & Probe Macro-F1 $\uparrow$ \\
\midrule
AICoG & 1.064 & 0.603 & 5.55\%  & 3.07 & 0.670 & 0.657 \\
MMSBM & 0.191 & 0.935 & 78.95\% & 1.24 & 0.306 & 0.121 \\
\bottomrule
\end{tabular}
}
\end{table}

 \begin{figure*}[h]
\centering

\begin{subfigure}{0.33\textwidth}
    \centering
    \includegraphics[width=\linewidth]{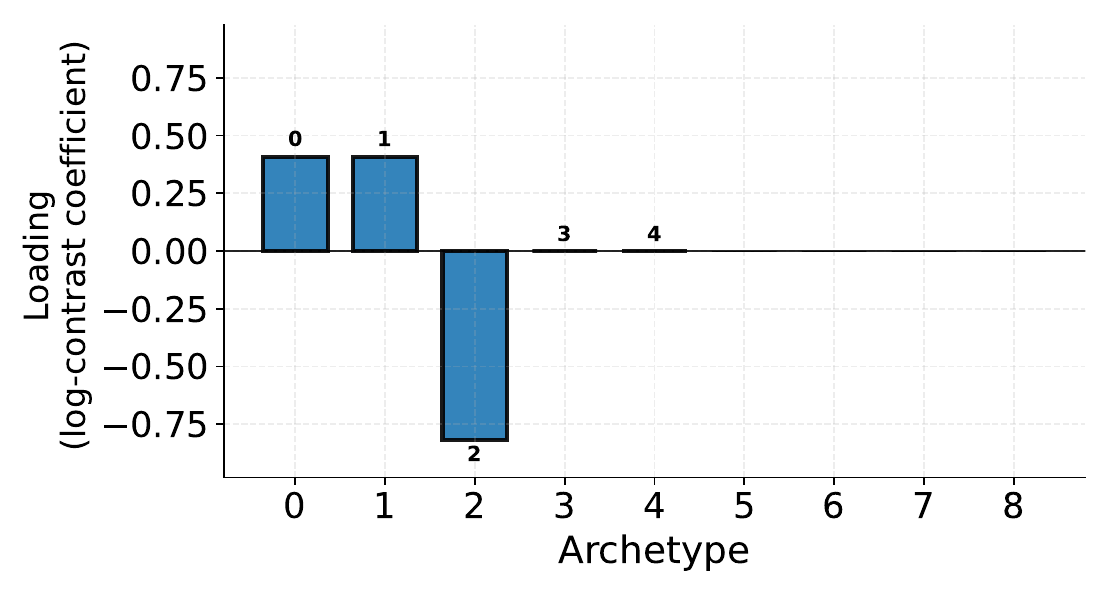}
    \caption{Helmert basis: balance loadings (coord 1).}
    \label{fig:helmert_loadings}
\end{subfigure}
\hfill
\begin{subfigure}{0.33\textwidth}
    \centering
    \includegraphics[width=\linewidth]{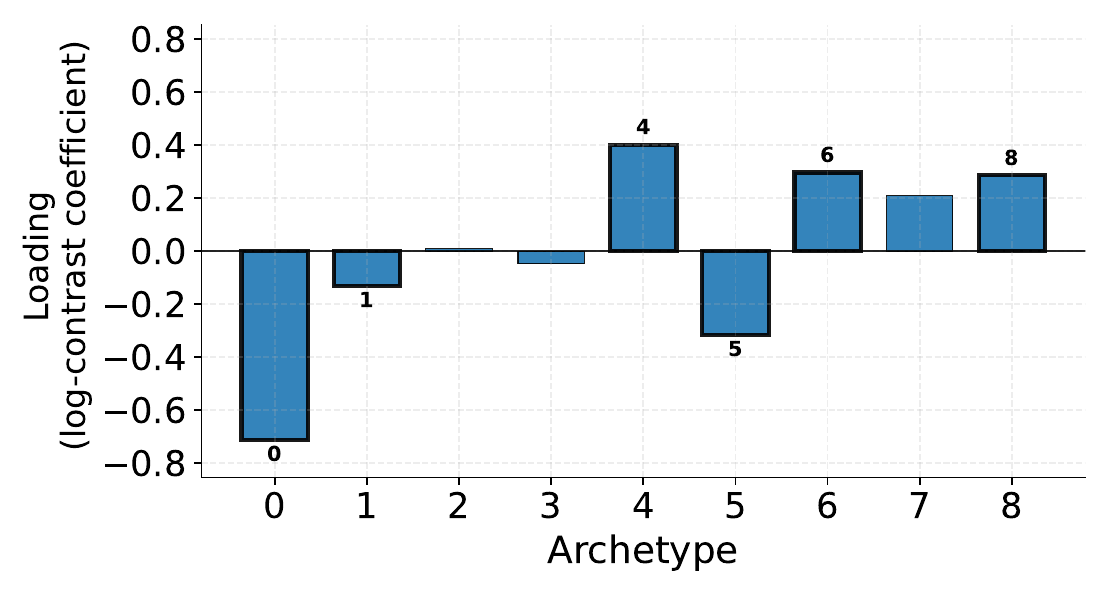}
    \caption{Learned basis: balance loadings (coord 3).}
    \label{fig:learned_loadings}
\end{subfigure}
\hfill
\begin{subfigure}{0.33\textwidth}
    \centering
    \includegraphics[width=\linewidth]{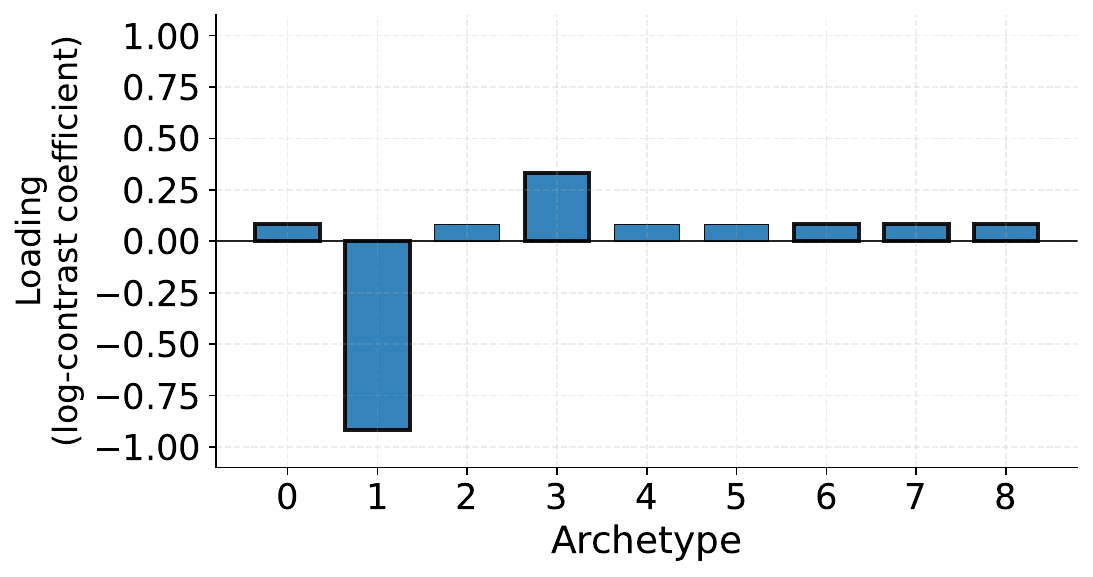}
    \caption{Varimax learned basis: balance loadings (coord 7).}
    \label{fig:varimax_loadings}
\end{subfigure}

\begin{subfigure}{0.33\textwidth}
    \centering
    \includegraphics[width=\linewidth]{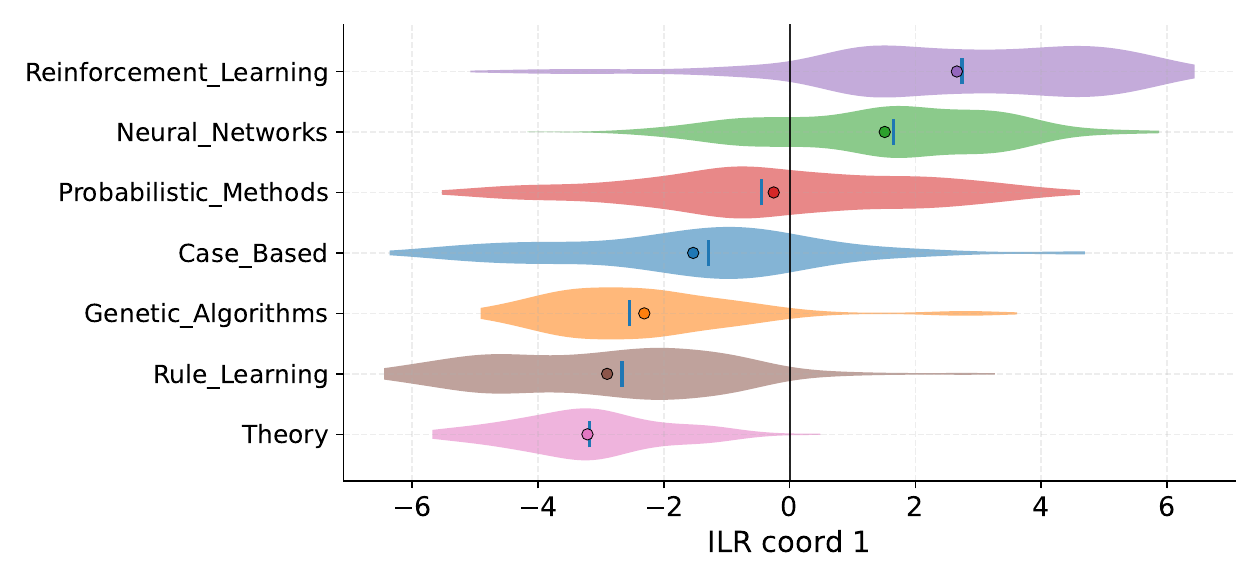}
    \caption{Helmert basis: label-wise distributions along the selected balance.}
    \label{fig:helmert_violins}
\end{subfigure}
\hfill
\begin{subfigure}{0.33\textwidth}
    \centering
    \includegraphics[width=\linewidth]{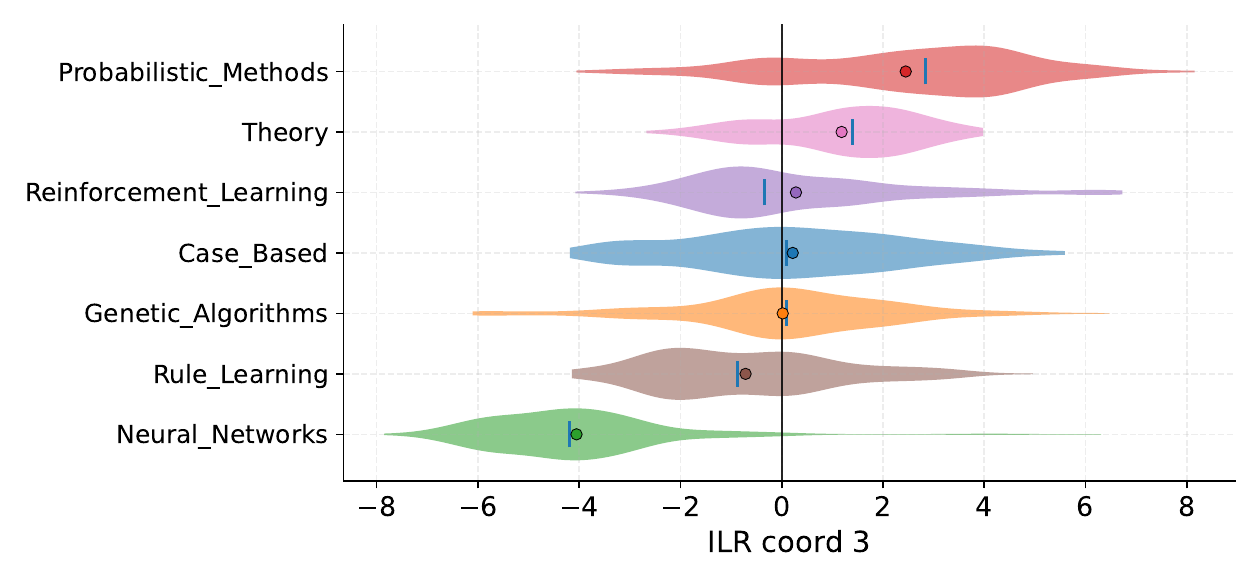}
    \caption{Learned basis: label-wise distributions along the selected balance.}
    \label{fig:learned_violins}
\end{subfigure}
\hfill
\begin{subfigure}{0.33\textwidth}
    \centering
    \includegraphics[width=\linewidth]{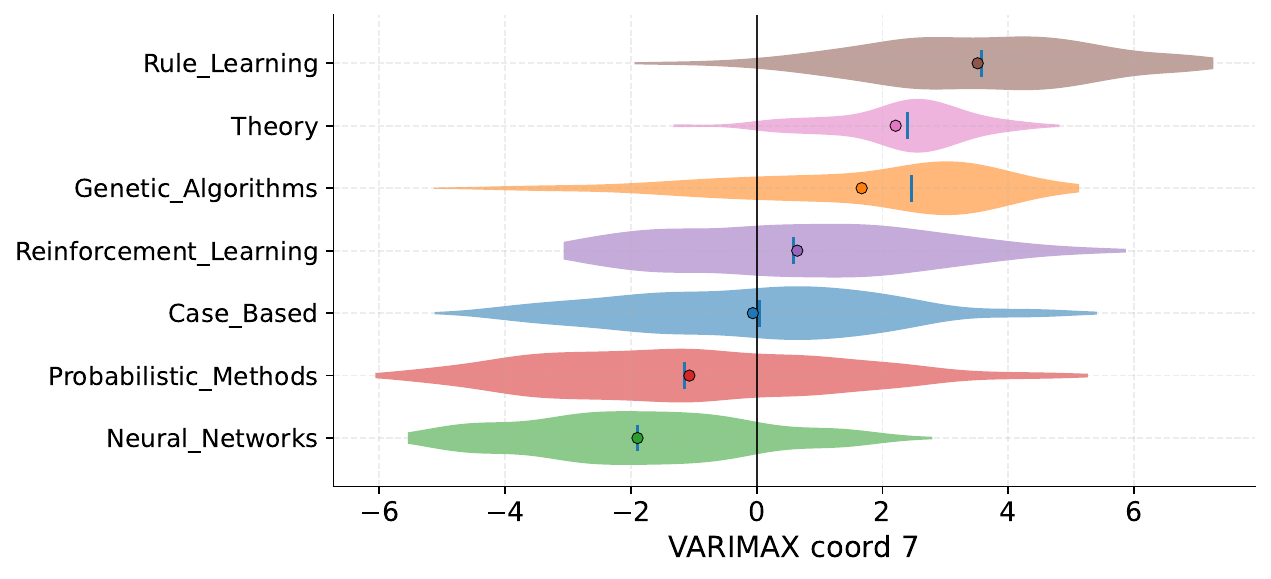}
    \caption{Varimax learned basis: label-wise distributions along the selected balance.}
    \label{fig:varimax_violins}
\end{subfigure}

\caption{
\textbf{Cora dataset ($D{=}8$).}
Label-wise distributions along ILR balances under three valid bases: Helmert (left), learned (center), and varimax-rotated learned (right). Top row shows balance loadings (archetypal contributions to each log-ratio contrast); bottom row shows label-wise distributions of the corresponding ILR coordinates.
}

\label{fig:cora_balances}
\end{figure*}

\paragraph{Membership interiority and overlap.}
To test whether AICoG learns genuinely overlapping simplex-valued roles rather
than near-discrete assignments, we compare the learned memberships of AICoG
against MMSBM on Cora. We report four statistics: entropy, where larger values
indicate more overlap; maximum component mass, where larger values indicate
more nearly one-hot assignments; near-corner fraction, the percentage of nodes
whose largest component exceeds a high threshold; and the effective number of
active roles, defined as $\exp(H(z_i))$ and averaged over nodes. Table
\ref{tab:interiority} shows that AICoG learns substantially more interior
memberships than MMSBM. Moreover, a classifier trained directly on the learned
memberships achieves much higher accuracy and macro-F1 for AICoG, indicating
that this overlap is label-informative rather than diffuse noise.

\textbf{Geometric Explainability through Compositional Balances.}
Figure~\ref{fig:cora_balances} shows label-wise distributions along individual ILR coordinates under three valid ILR bases: a fixed Helmert basis, a learned basis, and a varimax rotation of the learned basis. Each ILR coordinate corresponds to a log-ratio balance of the form $x_{ib}=\langle \log(\mathbf{z}_i), \mathbf{v}_b\rangle$, where $\mathbf{v}_b$ defines a contrast between groups of archetypal components. Although the specific balances differ across bases, as expected under orthogonal reparameterizations, label separation is consistent across all views, indicating that it reflects invariant structure of the underlying Aitchison geometry rather than a particular coordinate choice. The Helmert basis provides a neutral reference, the learned basis aligns separation with data-adapted directions, and the varimax rotation yields sparser, more interpretable balances. Crucially, these visualizations admit a compositional explanation: labels differ in stable log-ratio trade-offs among shared archetypal factors. This form of explanation is unavailable to unconstrained Euclidean embeddings, which rely on abstract directions, and to mixed-membership models, which assume axis-aligned roles. AICoG instead provides a continuous, geometry-level explanation in terms of relative compositions, without requiring identifiable roles. 

We further quantify whether individual compositional balances capture
label-relevant structure. For each learned ILR basis, a balance has the form
$x_{ib}=\langle \log z_i, v_b\rangle$, corresponding to a log-ratio contrast
between groups of archetypes. We identify the most label-associated balance
and evaluate it using three one-dimensional statistics: a 1D logistic probe
trained using only that balance, an ANOVA $F$-statistic measuring between-label
separation relative to within-label variation, and mutual information with the
node label. On Cora, a single learned balance achieves 
$\approx 0.40$ 1D probe accuracy, ANOVA $F \approx 319$, and mutual information
$\approx 0.44$. Therefore, individual log-ratio trade-offs can isolate
meaningful label-associated structure, rather than interpretability arising
only from the full high-dimensional embedding.

\begin{figure*}[!t]
\centering

\begin{subfigure}{0.32\textwidth}
    \centering
    \includegraphics[width=\linewidth]{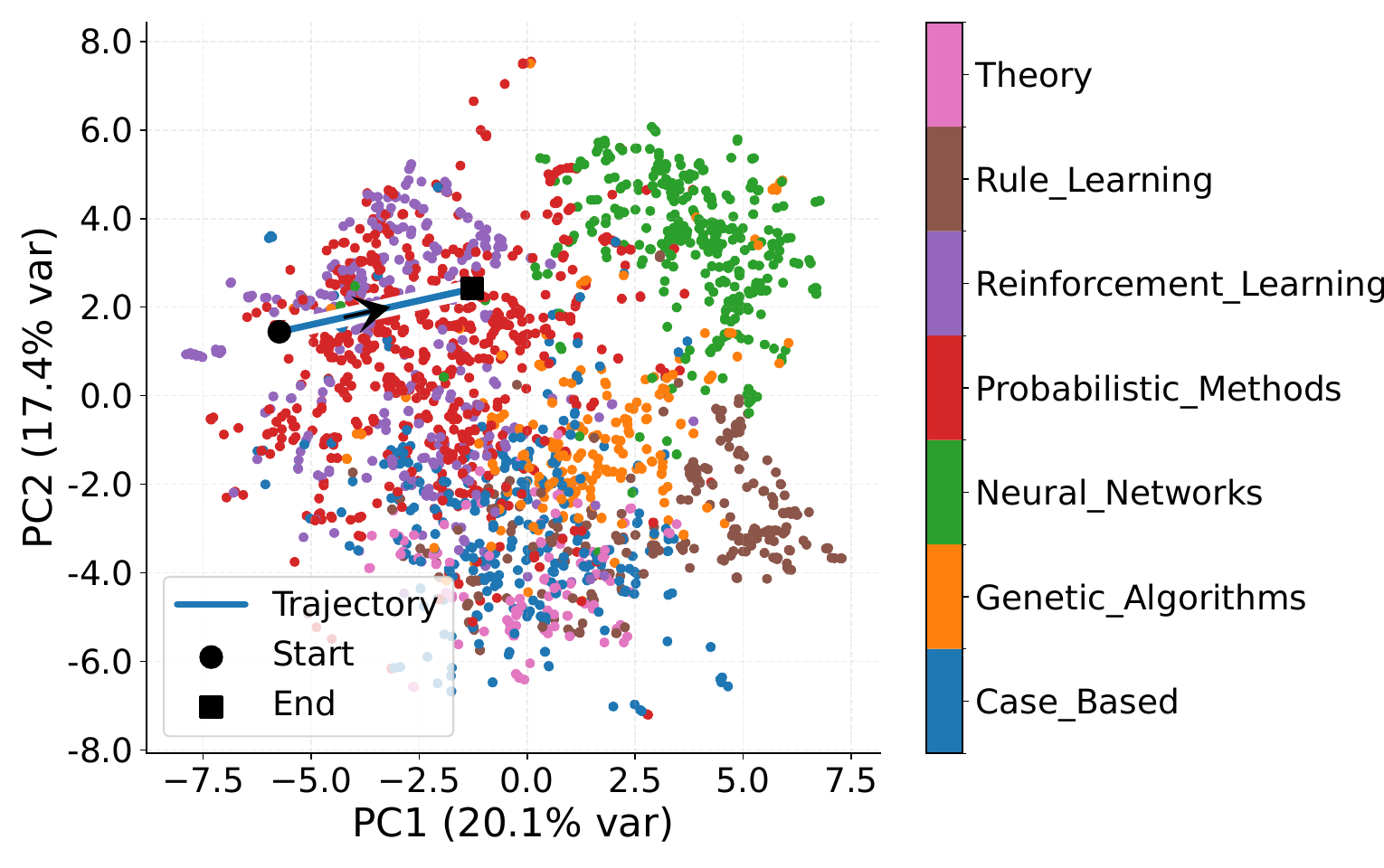}
    \caption{$K{=}9$ (full composition).}
    \label{fig:subcompK9_traj}
\end{subfigure}
\hfill
\begin{subfigure}{0.32\textwidth}
    \centering
    \includegraphics[width=\linewidth]{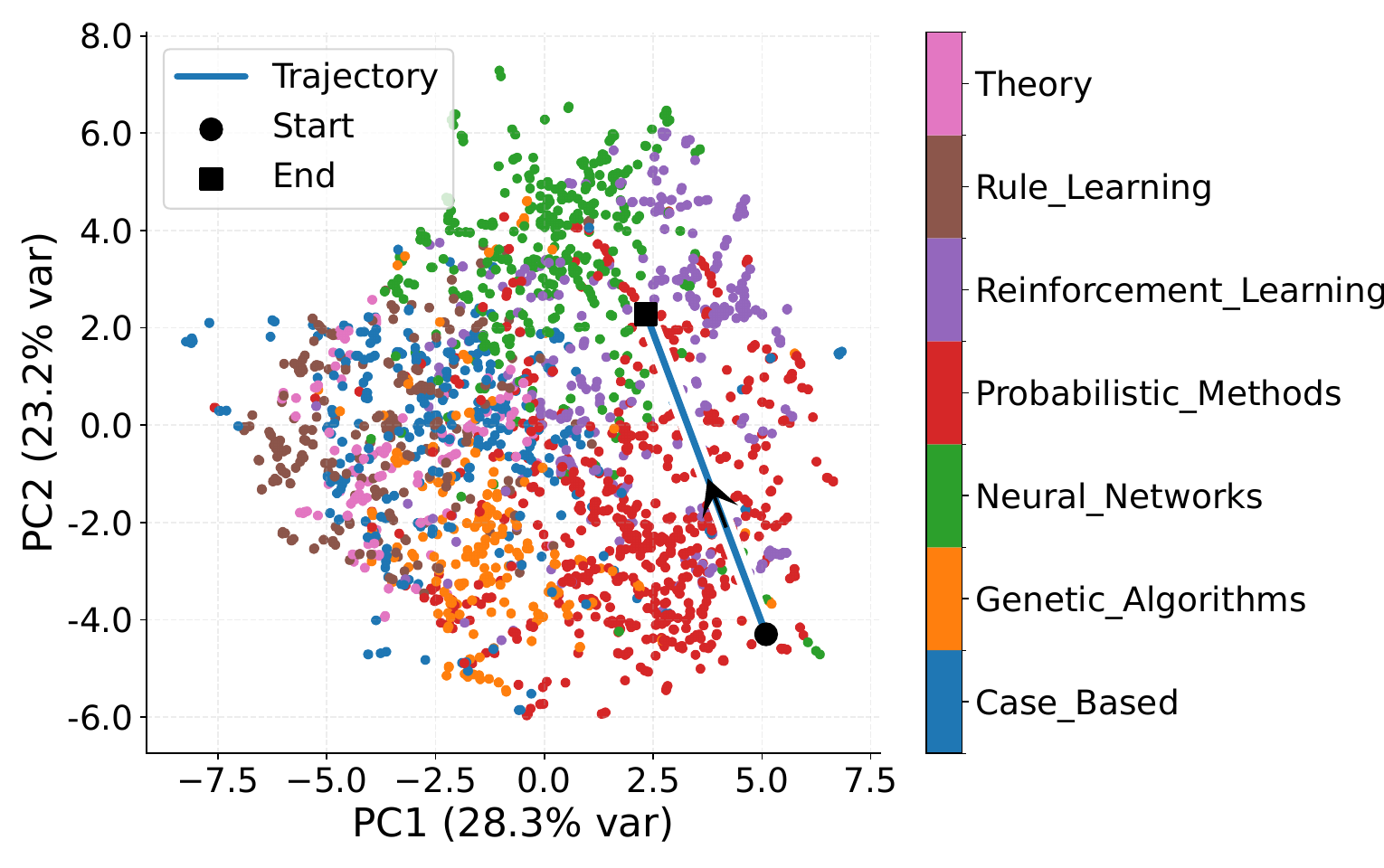}
    \caption{$K{=}6$ subcomposition (renormalized).}
    \label{fig:subcompK6_traj}
\end{subfigure}
\hfill
\begin{subfigure}{0.32\textwidth}
    \centering
    \includegraphics[width=\linewidth]{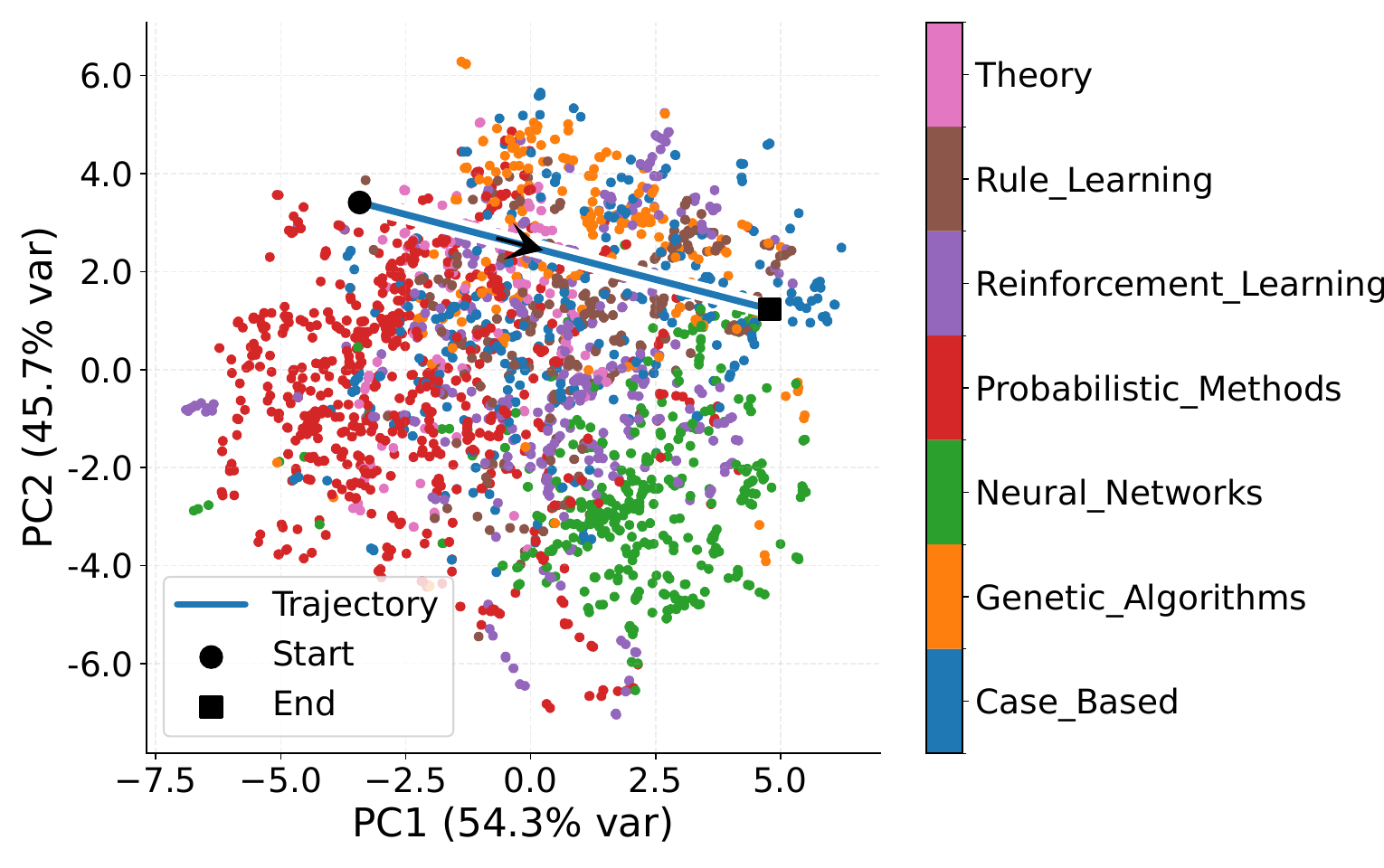}
    \caption{$K{=}3$ subcomposition (renormalized).}
    \label{fig:subcompK3_traj}
\end{subfigure}

\caption{
\textbf{Interpretable trade-off trajectories under subcomposition (Cora).}
Node embeddings are shown in a 2D PCA projection of ILR coordinates (Helmert basis), with PCA fit separately for each $K$ and nodes colored by label. The overlaid curve traces a paired log-ratio intervention applied to the same node across panels, increasing archetype $a$ and decreasing $b$ followed by closure. The intervention is defined intrinsically on the simplex, illustrating an interpretable trajectory under principled subcomposition.
}

\label{fig:cora_tradeoff_subcomp}
\end{figure*}

\begin{figure}[!t]
\centering
\begin{subfigure}[t]{.235\textwidth}
    \centering
    \includegraphics[width=\linewidth]{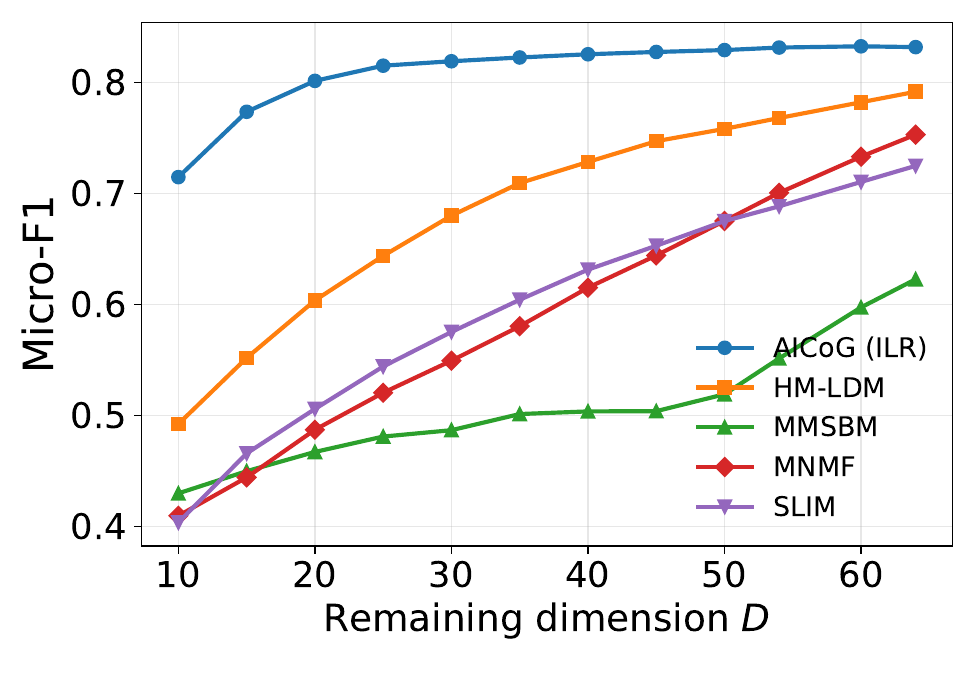}
    \caption{\textbf{Absolute performance.} Micro-F1 after component removal.}
    \label{fig:subcomp_perf}
\end{subfigure}
\hfill
\begin{subfigure}[t]{.235\textwidth}
    \centering
    \includegraphics[width=\linewidth]{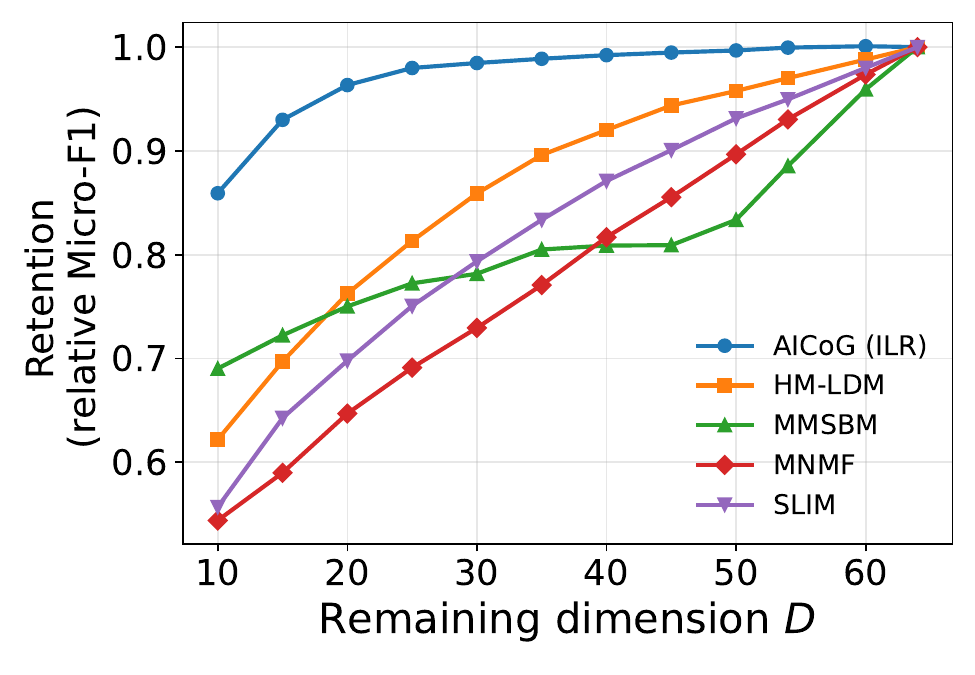}
    \caption{\textbf{Retention.} Relative utility retained compared to the full model.}
    \label{fig:subcomp_retain}
\end{subfigure}
\caption{\textbf{Subcompositional evaluation on \textsc{Cora} (trained at $D{=}64$).} We evaluate semantically meaningful component removal by restricting each simplex-based representation to $K'$ components, applying closure, and probing the resulting $D{=}K'-1$ embeddings without retraining. Curves are averaged over 50 random removal masks; higher is better in both panels.}
\label{fig:subcomp}
\end{figure}

\textbf{Interpretable trade-off trajectories.}
To illustrate geometric explainability in the learned compositional latent space, we define a controlled intervention directly on the simplex. For a node with composition $\mathbf{z}\in\Delta^{K-1}$ and two archetypal components $(a,b)$, we construct a paired log-ratio trajectory by reweighting these components in opposite directions, followed by closure: $z_a(s)\propto z_a e^{s},\quad z_b(s)\propto z_b e^{-s}$ and 
$z_k(s)\propto z_k\ (k\notin\{a,b\}),\quad \mathbf{z} (s)=\mathcal{C}(\mathbf{z}(s))$,
where $s\in\mathbb{R}$ controls the trade-off strength. This produces a smooth increase of the archetype $a$ compared to $b$ while preserving compositional constraints. We map $\mathbf{z}(s)$ to the ILR coordinates and visualize the resulting curve using PCA (for visualization only), in Figure~\ref{fig:cora_tradeoff_subcomp}. Crucially, the trajectory is defined intrinsically on the simplex and is therefore invariant to the choice of ILR basis: different bases yield different balance-based views of the same underlying trade-off geometry. Such semantics-preserving interventions are not available in unconstrained Euclidean embeddings, where coordinate changes are arbitrary, nor in mixed-membership models, which only support axis-aligned interpolation between discrete roles. The ability to construct smooth, basis-invariant trade-off paths is a direct consequence of equipping the latent simplex with Aitchison geometry.

\textbf{Subcompositional evaluation.}
We evaluate semantically meaningful component removal by restricting each learned composition to a subset of components and applying closure, then measuring downstream utility as the number of retained components decreases (no retraining). We exclude unconstrained Euclidean baselines from this comparison because they do not admit an analogous operation: Euclidean coordinates are basis-dependent and lack a semantic interpretation as components, so ``dimension dropping'' is necessarily heuristic, and its apparent robustness often reflects representational redundancy rather than semantic stability. Results are reported for $D{=}64$ embeddings on \textsc{Cora} in Figure~\ref{fig:subcomp}: panel (a) shows absolute Micro-F1 as a function of the remaining dimension, while panel (b) reports retention relative to the full model, $\mathrm{retain}(K')=\mathrm{Micro-F1}(K')/\mathrm{Micro-F1}(K_{\text{full}})$. For each simplex-based method, we remove components to obtain $K' = K_{\text{full}} - K_{\text{rem}}$, apply closure, and evaluate the resulting $D = K' - 1$ dimensional representations; all curves are averaged over 50 random removal masks. We observe that \textsc{AI-CoG} exhibits smooth degradation and achieves the strongest retention under component removal among simplex-based baselines, with the advantage most pronounced under aggressive compression.

\section{Conclusion \& Limitations}
We introduced Aitchison Compositional Graph embeddings (AICoG), which represents nodes as compositions over latent archetypal factors and models edges via distances in Aitchison geometry. Using an ILR isometry, AICoG retains the expressivity of Euclidean latent distance models while grounding similarity in relative trade-offs and enabling semantically meaningful component restriction. Empirically, AICoG is competitive in link prediction and node classification, showing that imposing compositional semantics can produce interpretable representations without sacrificing predictive performance in our benchmarks. The approach is most appropriate when node roles are naturally compositional; outside this regime, it may not improve accuracy over unconstrained Euclidean embeddings. Our experiments primarily follow standard connected-graph protocols, where the
training graph is connected or dominated by a large connected component. An
interesting direction is to extend \textsc{AICoG} to disconnected graphs or graphs with
many small connected components. In such settings, subcompositional structure
may provide a principled way to analyze whether different components rely on
partially independent subsets of archetypal factors, while preserving coherent
Aitchison geometry within each subcomposition.

\section*{Acknowledgments}
We gratefully acknowledge the reviewers for their constructive feedback and insightful comments. We also thank the area chair for their supportive meta-review and valuable suggestions for future research directions, which helped further strengthen our paper. N.N. is supported by the NOMIS and Stavros Niarchos Foundation. C.K. is supported by the IdAML Chair hosted at ENS Paris-Saclay, Université Paris-Saclay.

\section*{Impact Statement}

Many real-world networks, including social, biological, and economic systems, exhibit role structures that are continuous, overlapping, and context-dependent. Such settings challenge common graph representation learning methods, which often struggle to encode and interpret roles in a principled way. Widely used Euclidean graph embeddings typically yield explanations that depend on arbitrary coordinate choices, while mixed-membership models, though allowing overlap, rely on restrictive assumptions about discrete, axis-aligned roles. By grounding latent graph representations in Aitchison geometry, a canonical framework for compositional data, we introduce Aitchison Compositional Graph embeddings (AICoG), which provide interpretable representations by construction. Our approach enables explanations based on relative trade-offs between latent factors, supports continuous and overlapping role structure, and allows principled reasoning about component influence and stability through semantically meaningful restriction. Beyond the specific model studied here, Aitchison geometry provides a general foundation for extending compositional inductive biases to other graph learning architectures, including message-passing neural networks operating on compositional node states. As a result, AICoG contributes to interpretable graph representation learning in domains where roles cannot be cleanly separated, broadening the applicability of explainable graph models to complex real-world networks without introducing additional modeling assumptions.

\bibliography{bibliography}
\bibliographystyle{icml2026}


\newpage
\appendix
\onecolumn

\section*{Appendix}

This appendix provides supplementary material supporting the main paper. We include formal proofs and derivations, details on the ILR parameterization, complete experimental settings and hyperparameters, and additional empirical results. These materials clarify the methodology and enable full reproducibility of the reported experiments.

\subsection*{A. Proofs and Theoretical Results}

\begin{lemma}[Subcomposition corresponds to a projection of ILR embeddings]
\label{lem:subcomposition_projection_sup}
Let $z_i \in \Delta^{k-1}_\circ$ be compositional embeddings and let
$S \subset \{1,\dots,k\}$ with $|S|\ge 2$.
Define the reclosed subcomposition
\[
z_i^{(S)} = \mathcal{C}\big((z_{i,r})_{r\in S}\big) \in \Delta^{|S|-1}_\circ ,
\]
where $\mathcal{C}$ denotes closure.
Let $\operatorname{ILR}$ and $\operatorname{ILR}_S$ be ILR transforms on
$\Delta^{k-1}_\circ$ and $\Delta^{|S|-1}_\circ$, respectively.

Then there exists a linear map
$P_S \in \mathbb{R}^{(|S|-1)\times (k-1)}$
with orthonormal rows such that for all $i,j$,
\[
\big\|
\operatorname{ILR}_S(z_i^{(S)}) - \operatorname{ILR}_S(z_j^{(S)})
\big\|_2
=
\big\|
P_S \big(\operatorname{ILR}(z_i) - \operatorname{ILR}(z_j)\big)
\big\|_2 .
\]
\end{lemma}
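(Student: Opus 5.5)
We work entirely at the level of log vectors and reduce the claim to linear algebra on contrast spaces. Write $\mathbf{V}\in\mathbb{R}^{k\times(k-1)}$ for the ILR basis of the full simplex and $\mathbf{V}_S\in\mathbb{R}^{|S|\times(|S|-1)}$ for that of $\Delta^{|S|-1}_\circ$, with $\mathbf{V}^\top\mathbf{V}=\mathbf{I}$, $\mathbf{V}^\top\mathbf{1}=\mathbf{0}$, and similarly for $\mathbf{V}_S$. Let $E_S\in\mathbb{R}^{|S|\times k}$ be the coordinate-selection matrix that picks the entries indexed by $S$.

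\textbf{Step 1: ILR of the subcomposition.} Closure only subtracts a constant from the log vector: $\log z_i^{(S)} = E_S\log z_i - c_i\mathbf{1}_{|S|}$, where $c_i=\log\sum_{r\in S}z_{i,r}$. Since $\mathbf{V}_S^\top\mathbf{1}=\mathbf{0}$, we get
\[
\operatorname{ILR}_S(z_i^{(S)})=\mathbf{V}_S^\top E_S\log z_i .
\]
So closure is invisible in ILR coordinates, and the left-hand side equals $\|\mathbf{V}_S^\top E_S(\log z_i-\log z_j)\|_2$.

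\textbf{Step 2: express in terms of full ILR coordinates.} The projection of $\mathbb{R}^k$ onto the contrast space $\mathcal{C}$ is $\mathbf{V}\mathbf{V}^\top = \mathbf{I}-\tfrac1k\mathbf{1}\mathbf{1}^\top$. Hence
\[
\log z_i = \mathbf{V}\operatorname{ILR}(z_i)+\bar\ell_i\mathbf{1}_k,
\]
with $\bar\ell_i$ the mean of the log entries. Because $E_S\mathbf{1}_k=\mathbf{1}_{|S|}$ is annihilated by $\mathbf{V}_S^\top$, the constant term again drops out. This gives $\operatorname{ILR}_S(z_i^{(S)})=P_S\operatorname{ILR}(z_i)$ with
\[
P_S:=\mathbf{V}_S^\top E_S\mathbf{V}\in\mathbb{R}^{(|S|-1)\times(k-1)},
\]
which is in fact an identity stronger than the norm equality.

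\textbf{Step 3: orthonormal rows.} We need $P_SP_S^\top=\mathbf{I}_{|S|-1}$. Compute
\[
P_SP_S^\top=\mathbf{V}_S^\top E_S(\mathbf{I}-\tfrac1k\mathbf{1}\mathbf{1}^\top)E_S^\top\mathbf{V}_S .
\]
Now $E_SE_S^\top=\mathbf{I}_{|S|}$, and $E_S\mathbf{1}=\mathbf{1}_{|S|}$ is killed by $\mathbf{V}_S^\top$, so the expression reduces to $\mathbf{V}_S^\top\mathbf{V}_S=\mathbf{I}$.

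\textbf{Main obstacle.} The only delicate point is Step 3. The rank-one correction from the centering projector must vanish, and it does precisely because the embedded subcontrast vectors $E_S^\top\mathbf{V}_S$ are themselves sum-zero and thus already lie in $\mathcal{C}$. Equivalently, the rows of $P_S$ are the full-ILR coordinates of an orthonormal family in $\mathcal{C}$, namely the columns of $E_S^\top\mathbf{V}_S$. Once this is seen, the norm identity follows immediately by linearity, applying $P_S$ to $\operatorname{ILR}(z_i)-\operatorname{ILR}(z_j)$. The condition $|S|\ge2$ only ensures that $\Delta^{|S|-1}_\circ$ has a nontrivial ILR.
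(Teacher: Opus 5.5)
Your proof is correct and follows essentially the same route as the paper. You remove the closure constant via $\mathbf{V}_S^\top\mathbf{1}=\mathbf{0}$ and obtain the same map $P_S=\mathbf{V}_S^\top E_S\mathbf{V}$ (the paper's $V_S^\top R_S V$); your check that the rank-one centering term vanishes is the same fact the paper uses when it notes that the padded basis $R_S^\top V_S$ lies in the contrast space, so $R_S^\top V_S=VA$ with $A^\top A=I$, and the pointwise identity $\operatorname{ILR}_S(z_i^{(S)})=P_S\operatorname{ILR}(z_i)$ you highlight is implicit in the paper's derivation as well.
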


\begin{proof}

Recall
$z_i \in \Delta^{k-1}_\circ \subset \mathbb{R}^k$ is a vector
such that $(z_i)_r>0$ for all $r$ and $\mathbf{1}_k^\top z_i = 1$.
Let $\log(z_i)\in\mathbb{R}^k$ denote the elementwise logarithm.

Let $V\in\mathbb{R}^{k\times (k-1)}$ be an ILR basis matrix with orthonormal columns spanning the contrast
space, i.e.,
\[
V^\top V = I_{k-1},
\qquad
V^\top \mathbf{1}_k = 0.
\]
The ILR coordinates of $z_i$ are the $(k-1)$-dimensional row vector
\[
\mathrm{ILR}(z_i) \;\triangleq\; \log(z_i)^\top V \in \mathbb{R}^{k-1}.
\]

Fix a subset $S\subseteq\{1,\dots,k\}$ with $|S|=k'\ge 2$.
Let $R_S\in\mathbb{R}^{k'\times k}$ be the coordinate-selection matrix that extracts the entries in $S$,
so that $R_S z = (z_r)_{r\in S}\in\mathbb{R}^{k'}$ for any $z\in\mathbb{R}^k$.
Define the closure (re-normalization) map $C:\mathbb{R}^{k'}_{>0}\to \Delta^{k'-1}_\circ$ by
\[
C(u) \;\triangleq\; \frac{u}{\mathbf{1}_{k'}^\top u}.
\]
The reclosed subcomposition of $z_i$ on $S$ is
\[
z_i^{(S)} \;\triangleq\; C(R_S z_i)\in\Delta^{k'-1}_\circ.
\]

Let $V_S\in\mathbb{R}^{k'\times (k'-1)}$ be an ILR basis matrix for $\Delta^{k'-1}_\circ$ satisfying
\[
V_S^\top V_S = I_{k'-1},
\qquad
V_S^\top \mathbf{1}_{k'} = 0,
\]
and define the ILR coordinates of $z_i^{(S)}$ by
\[
\mathrm{ILR}_S(z_i^{(S)}) \;\triangleq\; \log(z_i^{(S)})^\top V_S \in \mathbb{R}^{k'-1}.
\]

Write $u_i \triangleq R_S z_i\in\mathbb{R}^{k'}_{>0}$ and $s_i\triangleq \mathbf{1}_{k'}^\top u_i>0$.
By definition of closure, $z_i^{(S)} = u_i/s_i$, hence
\[
\log z_i^{(S)}=\log u_i - (\log s_i)\mathbf{1}_{k'}.
\]
Multiplying by $V_S^\top$ and using $V_S^\top \mathbf{1}_{k'}=0$ gives
\[
\mathrm{ILR}_S(z_i^{(S)}) \;=\; V_S^\top \log z_i^{(S)}
\;=\; V_S^\top \log u_i
\;=\; V_S^\top \log(R_S z_i).
\]
Therefore,
\begin{equation}\label{eq:subcomp_diff_1}
\mathrm{ILR}_S(z_i^{(S)})-\mathrm{ILR}_S(z_j^{(S)})
= V_S^\top\big(\log(R_S z_i)-\log(R_S z_j)\big).
\end{equation}

Define the padded matrix $\widetilde V_S \triangleq R_S^\top V_S\in\mathbb{R}^{k\times(k'-1)}$. Then
\[
V_S^\top \log(R_S z) \;=\; (R_S^\top V_S)^\top \log z \;=\; \widetilde V_S^\top \log z,
\]
so \eqref{eq:subcomp_diff_1} becomes
\begin{equation}\label{eq:subcomp_diff_2}
\mathrm{ILR}_S(z_i^{(S)})-\mathrm{ILR}_S(z_j^{(S)})
=
\widetilde V_S^\top\big(\log z_i-\log z_j\big).
\end{equation}
Moreover,
\[
\widetilde V_S^\top \widetilde V_S
= V_S^\top R_S R_S^\top V_S
= V_S^\top I_{k'} V_S
= I_{k'-1},
\]
so the columns of $\widetilde V_S$ are orthonormal, and
\[
\widetilde V_S^\top \mathbf{1}_k
= V_S^\top R_S \mathbf{1}_k
= V_S^\top \mathbf{1}_{k'}
= 0,
\]
so they lie in the contrast space $\{x\in\mathbb{R}^k:\mathbf{1}_k^\top x=0\}$.
Since the columns of $V$ form an orthonormal basis for this contrast space, there exists
$A\in\mathbb{R}^{(k-1)\times(k'-1)}$ such that $\widetilde V_S = V A$; take $A\triangleq V^\top \widetilde V_S$.
Then
\[
A^\top A
= \widetilde V_S^\top V V^\top \widetilde V_S
= \widetilde V_S^\top \widetilde V_S
= I_{k'-1},
\]
so $A$ has orthonormal columns. Substituting $\widetilde V_S=VA$ into \eqref{eq:subcomp_diff_2} gives
\[
\mathrm{ILR}_S(z_i^{(S)})-\mathrm{ILR}_S(z_j^{(S)})
=
A^\top V^\top\big(\log z_i-\log z_j\big)
=
\]
\[
=A^\top\big(\mathrm{ILR}(z_i)-\mathrm{ILR}(z_j)\big).
\]
Define $P_S\triangleq A^\top\in\mathbb{R}^{(k'-1)\times(k-1)}$. Then $P_S P_S^\top = A^\top A=I_{k'-1}$, and taking
Euclidean norms yields the claimed identity. Finally,
\[
P_S = A^\top = (V^\top \widetilde V_S)^\top = \widetilde V_S^\top V = V_S^\top R_S V,
\]
as stated.
\end{proof}

\begin{theorem}[Expressive equivalence of ILR-compositional latent distance models]
\label{thm:expressive_equivalence_sup}
Let $k \ge 2$, let $g:[0,\infty)\to\mathbb{R}$, and let $\sigma:\mathbb{R}\to(0,1)$ be a link function.
Let $\Delta^{k-1}_\circ$ denote the open simplex and let
$\operatorname{ILR}:\Delta^{k-1}_\circ \to \mathbb{R}^{k-1}$
be an isometric bijection.

Consider the latent distance model
\[
\mathbb{P}(A_{ij}=1)
=
\sigma\!\left(\alpha - g\!\left(\|\operatorname{ILR}(z_i)-\operatorname{ILR}(z_j)\|_2\right)\right),
 z_i \in \Delta^{k-1}_\circ .
\]
Then the set of edge--probability matrices realizable by this model is identical to that of the Euclidean latent distance model
\[
\mathbb{P}(A_{ij}=1)
=
\sigma\!\left(\alpha - g(\|x_i-x_j\|_2)\right),
\qquad x_i \in \mathbb{R}^{k-1}.
\]
\end{theorem}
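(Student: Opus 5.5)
The plan is to prove two set inclusions between the families of edge--probability matrices, indexed by the same $\alpha$, $g$ and $\sigma$. The key fact is that $\operatorname{ILR}:\Delta^{k-1}_\circ\to\mathbb{R}^{k-1}$ is a bijection. In both models the probability matrix $\big(\mathbb{P}(A_{ij}=1)\big)_{i\neq j}$ depends on the latent positions only through the pairwise Euclidean distances among the points fed into $g$. So it is enough to show that the two models can realize exactly the same configurations of $N$ points in $\mathbb{R}^{k-1}$.

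First I would make the bijectivity concrete, since the statement only asserts it. For the injective direction, write $\operatorname{ILR}(z)=V^\top\log z$. Because $\log z$ and $\log z'$ can differ only by a multiple of $\mathbf{1}$ when their $V^\top$-images agree, and closure removes that multiple, two compositions with the same image coincide. For the surjective direction, given $x\in\mathbb{R}^{k-1}$, set $z=\mathcal{C}(\exp(Vx))$. Since $\log z=Vx-c\mathbf{1}$ for some scalar $c$, and $V^\top V=I$ and $V^\top\mathbf{1}=0$, we get $V^\top\log z=x$. Thus the inverse is $\operatorname{ILR}^{-1}(x)=\mathcal{C}(\exp(Vx))$, which is the softmax of $Vx$.

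($\subseteq$) Take any compositional configuration $z_1,\dots,z_N$ together with $\alpha$. Set $x_i=\operatorname{ILR}(z_i)\in\mathbb{R}^{k-1}$ and keep the same $\alpha$. The two models then compute identical arguments of $\sigma$ entrywise, so the matrices agree.

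($\supseteq$) Take any Euclidean configuration $x_1,\dots,x_N$ together with $\alpha$. Set $z_i=\operatorname{ILR}^{-1}(x_i)\in\Delta^{k-1}_\circ$, which exists by surjectivity. Then $\operatorname{ILR}(z_i)-\operatorname{ILR}(z_j)=x_i-x_j$, so each entry is again unchanged.

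The only real content is surjectivity onto all of $\mathbb{R}^{k-1}$. This is exactly where the open simplex matters: boundary points would have undefined logarithms, and restricting to a bounded subset would lose expressiveness. That step is the one I would write out carefully via the explicit inverse $\mathcal{C}(\exp(Vx))$ above.

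The remaining steps are bookkeeping. I would note that no assumptions on $g$ or $\sigma$ are needed, and that diagonal entries are irrelevant or treated identically in both models. I would also remark that the conclusion holds for any valid basis $V$, since different bases differ by an orthogonal map that preserves the distances.
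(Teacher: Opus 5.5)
Your proposal is correct and follows essentially the same route as the paper: establish that $\operatorname{ILR}$ is a bijection via the explicit inverse $\mathcal{C}(\exp(Vx))$ and the argument that $\log z-\log z'\in\operatorname{span}(\mathbf{1})$, then map configurations in both directions so that every entry of the probability matrix is unchanged. Your remark that surjectivity is the only ingredient actually needed is accurate but does not change the argument. Injectivity and the isometry property are not used for the set equality; the paper's appeal to isometry in its last step amounts to substituting $x_i=\operatorname{ILR}(z_i)$.
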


\begin{proof}

For any fixed valid ILR basis $V$, the corresponding ILR transformation
is a bijection from $\Delta_\circ^{K-1}$ to $\mathbb{R}^{K-1}$.
To see surjectivity, let $\mathbf{x}\in\mathbb{R}^{K-1}$ be arbitrary
and define
\[
\mathbf{z}
=
\frac{\exp(V\mathbf{x})}
{\mathbf{1}^\top\exp(V\mathbf{x})}.
\]
Then $\mathbf{z}\in\Delta_\circ^{K-1}$ and
\[
\log\mathbf{z}
=
V\mathbf{x}
-
\log\!\left(\mathbf{1}^\top\exp(V\mathbf{x})\right)\mathbf{1}.
\]
Therefore,
\[
\operatorname{ILR}_V(\mathbf{z})
=
\log(\mathbf{z})^\top V
=
\mathbf{x}^\top,
\]
where we used $V^\top V=I_{K-1}$ and $V^\top\mathbf{1}=0$.

For injectivity, if
$\operatorname{ILR}_V(\mathbf{z})
=
\operatorname{ILR}_V(\mathbf{z}')$,
then $\log\mathbf{z}-\log\mathbf{z}'$ is orthogonal to the contrast
space and is therefore proportional to $\mathbf{1}$.
Hence $\mathbf{z}=c\mathbf{z}'$ for some $c>0$; since both compositions
sum to one, $c=1$, and thus $\mathbf{z}=\mathbf{z}'$.

Consequently, for any collection
$\{\mathbf{x}_i\}_{i=1}^n\subset\mathbb{R}^{K-1}$ there exists a unique
collection
$\{\mathbf{z}_i\}_{i=1}^n\subset\Delta_\circ^{K-1}$
such that
$\mathbf{x}_i=\operatorname{ILR}_V(\mathbf{z}_i)$ for all $i$.

Since $\operatorname{ILR}$ is bijective, for any collection $\{x_i\}_{i=1}^n \subset \mathbb{R}^{k-1}$ there exists a unique collection $\{z_i\}_{i=1}^n \subset \Delta^{k-1}_\circ$ such that
$x_i=\operatorname{ILR}(z_i)$ for all $i$, and conversely $z_i=\operatorname{ILR}^{-1}(x_i)$ exists for all $i$.
Because $\operatorname{ILR}$ is an isometry, for all $i<j$,
\[
\|x_i-x_j\|_2
=
\|\operatorname{ILR}(z_i)-\operatorname{ILR}(z_j)\|_2 .
\]
Substituting this identity into the respective expressions for the edge probabilities shows that both parameterizations induce identical probabilities for every node pair.
Therefore both models realize the same set of edge--probability matrices.
\end{proof}


\subsection*{B. Model Parameterization and Interpretability}

\paragraph{Constrast space contains all valid log-ratio contrasts}
A linear log-contrast on $\mathbf{z}\in\mathbb{R}^K_{>0}$ can be written as $\mathbf{v}^\top \log \mathbf{z}$ for some $\mathbf{v}\in\mathbb{R}^K$ (elementwise $\log$). Requiring invariance to positive rescaling, $\mathbf{z}\mapsto c\mathbf{z}$ for any $c>0$, yields
\[
\mathbf{v}^\top \log(c\mathbf{z})=\mathbf{v}^\top \log \mathbf{z}\quad \forall\,\mathbf{z},\;\forall\,c>0.
\]
Using $\log(c\mathbf{z})=\log\mathbf{z}+(\log c)\mathbf{1}$, we obtain
\[
\mathbf{v}^\top \log(c\mathbf{z})
=
\mathbf{v}^\top \log \mathbf{z}+(\log c)\,\mathbf{v}^\top \mathbf{1}.
\]
Thus the invariance condition holds for all $c>0$ if and only if $\mathbf{v}^\top\mathbf{1}=0$, i.e., $\mathbf{v}\in\mathcal{C}=\{\mathbf{v}\in\mathbb{R}^K:\mathbf{v}^\top\mathbf{1}=0\}$.
In particular, any pairwise log-ratio $\log(z_a/z_b)=\log z_a-\log z_b$ corresponds to $\mathbf{v}=\mathbf{e}_a-\mathbf{e}_b\in\mathcal{C}$, and more general group log-ratio contrasts are obtained by any $\mathbf{v}\in\mathcal{C}$ with positive weights on one group and negative weights on another (summing to zero).

\paragraph{Learning a ILR basis.}
Importantly, to enable data-driven and informative coordinate system for human interpretation of compositional contrasts, we introduce a learnable ILR basis that is trained jointly with the node embeddings. We parameterize an unconstrained matrix as, Let $W \in \mathbb{R}^{K \times (K-1)}$ be any unconstrained parameter matrix.
Define
\[
\widetilde{W} = W - \frac{1}{K}\mathbf{1}\mathbf{1}^\top W,
\qquad
V = \mathrm{QR}(\widetilde{W}),
\]
and take the first $K-1$ columns of $V$.
Then $V$ is a valid ILR basis:
(i) $V^\top V = I$ (orthonormality),
(ii) $V^\top \mathbf{1} = 0$ (contrast constraint).
Thus, any learned $W$ induces a valid ILR coordinate system.

\paragraph{Geometric role definition.}
In our framework, a \emph{role} denotes a pattern of interaction behavior captured by proximity in the learned latent geometry. Formally, nodes $i$ and $j$ are said to occupy similar roles when their latent representations are close in Aitchison geometry, i.e., when $d_A(\mathbf{z}_i,\mathbf{z}_j)$ is small, equivalently when their ILR embeddings satisfy $\|\mathbf{y}_i-\mathbf{y}_j\|_2$ is small. Roles are not associated with individual simplex components or coordinate axes; instead, they correspond to regions of the latent space containing nodes with similar relative archetypal compositions and interaction profiles. Consequently, roles are continuous, overlapping, and invariant to orthogonal reparameterizations of the ILR coordinates, aligning with regular equivalence rather than cohesive communities.

\paragraph{Geometric role explainability.}
This geometric role definition leads to a distinct notion of explainability. Classical mixed-membership models, such as MMSBM, explain graph structure through \emph{coordinate-level roles}: each latent dimension corresponds to an identifiable role (up to permutation), and node representations encode membership weights along these axes. In contrast, because our likelihood depends only on distances in ILR space, the representation is invariant under orthogonal reparameterizations, and individual coordinates or simplex components cannot be interpreted as roles. Instead, semantic meaning is attributed to invariant geometric structure. Unlike general Euclidean embeddings, where geometry is identifiable but lacks intrinsic semantic grounding, our model endows the latent space with Aitchison geometry, so distances correspond to differences in relative mixtures and principled operations such as subcomposition preserve meaning. As a result, roles in our framework are explainable by construction: their meaning is given by stable relative trade-offs among latent archetypal factors, rather than by arbitrary coordinate assignments.

\paragraph{Operational interpretability.}
Interpretation is built into the representation rather than added post hoc. Each node is embedded as a composition $\mathbf{z}_i\in\Delta^{K-1}$, which summarizes its relative emphasis over $K$ latent archetypal factors. Similarity is defined in Aitchison geometry and computed via ILR coordinates, which preserve Aitchison distances while enabling unconstrained optimization in $\mathbb{R}^{K-1}$. Under a chosen ILR basis, each coordinate corresponds to a \emph{balance}, i.e., a log-ratio contrast between groups of archetypes, so proximity and predicted links can be interpreted in terms of relative trade-offs. Finally, subcompositional coherence ensures that restricting to a subset of components and renormalizing remains geometrically well-defined (Lemma~\ref{lem:subcomposition_projection_sup}), enabling principled analyses of how archetype subsets affect embeddings and predictions.

Overall, standard Euclidean graph embeddings, two nodes are close simply because the model places them nearby in an abstract vector space; the distance itself has no intrinsic meaning, and there is no principled answer to \emph{why} the nodes are close. In our model, proximity admits a direct explanation. Nodes are close because they exhibit similar \emph{relative mixtures} of latent factors: they emphasize the same archetypes to similar degrees relative to others. Distances in Aitchison geometry therefore quantify differences in relative trade-offs rather than arbitrary vector differences. This grounds similarity in interpretable structure and makes explanations arise directly from the geometry.

\subsection*{C. Experimental Setup}

\paragraph{Datasets.} 

We evaluate on multiple undirected citation and collaboration networks: \textsl{Cora} (citation network of machine learning papers with $7$ classes) \citep{cora}, \textsl{Citeseer} (citation network of computer and information science papers with $6$ classes) \citep{cora}, \textsl{DBLP} (co-authorship network with node labels denoting research fields) \citep{dblp}, and the arXiv collaboration networks \textsl{AstroPh}, \textsl{GrQc}, and \textsl{HepTh} \citep{leskovec2007graph}. We also include the social network \textsl{LastFM} (users of a music streaming service in Asia; labels correspond to $14$ countries) \citep{lastfm}. Following prior work, we treat all graphs as unweighted and undirected (citation networks are symmetrized); detailed dataset statistics are provided in the main paper.

\textbf{Baselines.}
We compare \textsc{AICoG} against a diverse set of graph representation learning methods, including shallow embedding approaches, matrix factorization and mixed-membership models, and latent distance models. Shallow embedding methods include \textsc{Node2Vec}~\citep{grover2016node2vec}, which optimizes a skip-gram objective over biased random walks to capture network proximity, and \textsc{Role2Vec}~\citep{role2vec}, which uses attributed walks over structural features to learn role-based embeddings. We also include \textsc{NetMF}~\citep{netmf}, a matrix factorization method that provides an explicit factorization view of random-walk-based embeddings. Mixed-membership and factorization-based approaches include the \textsc{MMSBM}~\citep{airoldi2007mixedmembershipstochasticblockmodels}, a classical bilinear mixed-membership block model, and \textsc{MNMF}~\citep{MNMF}, which incorporates modularity regularization into nonnegative matrix factorization. We further consider simplex-based latent distance models that define node representations as mixtures over latent archetypes. These include \textsc{SLIM-Raa}~\citep{pmlr-v206-nakis23a} and \textsc{HM-LDM}~\citep{nakis2022hmldmhybridmembershiplatentdistance}, both of which project simplex-valued representations into Euclidean space via a linear transformation to increase representational capacity. Although \textsc{SLIM-Raa} was originally proposed for signed networks under a Skellam likelihood, we adapt it to unsigned graphs by employing a Bernoulli likelihood. Finally, we include a \textsc{Simplex-Euclidean} baseline, a latent distance model operating directly on the simplex using Euclidean geometry without an ILR/Aitchison transformation, in order to isolate the effect of compositional geometry.

\textbf{Tuning of the baselines.}
All baseline methods were implemented using the \textsc{Karate Club} library \citep{karateclub}, except for \textsc{Node2Vec}, which used \textsc{StellarGraph} \citep{StellarGraph}. We used the following hyperparameters. \textsc{NetMF}: iterations $=10$; PMI power order $=2$; negative samples $=1$. \textsc{Role2Vec}: epochs $=5$; window size $=10$; walk length $=80$; learning rate $=0.05$.  \textsc{Node2Vec}: context size $=5$; walks per node $=10$; epochs $=5$; $(p,q)=(1,1)$ for link prediction and $(p,q)=(1,1.5)$ for node classification. \textsc{MNMF}: clusters $=10$; KKT penalty $=0.2$; clustering penalty $=0.45$; modularity regularization penalty $=0.8$; similarity matrix parameter $=5$. For \textsc{SLIM-Raa}, \textsc{HM-LDM}, \textsc{MMSBM}, and \textsc{Simplex-Euclidean}, we optimized the negative Bernoulli log-likelihood (matching \textsc{AICoG} up to the model-specific log-odds parameterization) using learning rate $0.05$ for $5{,}000$ epochs, so differences are attributable only to the log-odds form.

\paragraph{Link prediction.}

For link prediction, we follow the widely adopted evaluation protocol of \citet{perozzi2014deepwalk,nakis2023hierarchicalblockdistancemodel}. 
Specifically, we randomly remove $50\%$ of the edges while ensuring that the residual graph remains connected. The removed edges, together with an equal number of randomly sampled non-edges, form the positive and negative instances of the test set. The residual graph is then used to learn node embeddings. We evaluate performance on five benchmark networks, each over five runs and across multiple embedding dimensions ($D \in \{8,16,32,64\}$). Table~\ref{tab:AUC} reports the Area Under the Receiver Operating Characteristic Curve (AUC-ROC) (for Precision-Recall (PR-AUC) scores see supplementary). Across runs, the variance was consistently on the order of $10^{-3}$ and is omitted for readability. Following \citet{grover2016node2vec}, dyadic features are constructed using binary operators (average, Hadamard, weighted-$L_1$, weighted-$L_2$) and a logistic regression classifier with $L_2$ regularization is trained to make predictions. For models that define network likelihoods predictions are obtained directly from learned rates: we use the log-odds $\eta_{ij}$ of a test pair $\{i,j\}$ to compute link probabilities, with no additional classifier required. We define three variant of our model \textsc{AICoG (HB)} a model trained under a Helmert basis, \textsc{AICoG (LB)} a model trained under a Learned basis optimized with the model, and \textsc{AICoG (HB) SubComp} where we provide subcompoisiton coherence. Specifically, for a trained model with $65$ compositions (64 dimensions) we repeatedly remove a random subset of simplex components (keeping $32,16,8$), reclose the remaining composition to the simplex, and recompute ILR embeddings in the reduced simplex and then evaluate link prediction reporting AUC averaged over multiple random removal masks (different seeds) for each $K'$.

\subsection*{D. Additional Experiments}

\paragraph{Link prediction and node classification results.}

\begin{table*}[!h]
\caption{AUC PR scores for representation sizes of $8$, $16$, $32$, and $64$ averaged over five runs.}
\label{tab:AUC_PR}

\centering
\resizebox{0.78\textwidth}{!}{%
\begin{tabular}{l*{20}{c}}
\toprule
 & \multicolumn{4}{c}{\textsl{AstroPh}}
 & \multicolumn{4}{c}{\textsl{GrQc}}
 & \multicolumn{4}{c}{\textsl{HepTh}}
 & \multicolumn{4}{c}{\textsl{Cora}}
 & \multicolumn{4}{c}{\textsl{DBLP}} \\
\cmidrule(lr){2-5}\cmidrule(lr){6-9}\cmidrule(lr){10-13}\cmidrule(lr){14-17}\cmidrule(lr){18-21}
\multicolumn{1}{l}{Dimension ($D$)} &
  8 & 16 & 32 & 64 &
  8 & 16 & 32 & 64 &
  8 & 16 & 32 & 64 &
  8 & 16 & 32 & 64 &
  8 & 16 & 32 & 64 \\
\midrule
\textsc{Node2Vec}    &.949  & .962 &.968  &.970  & .944 & .948 &.950  &.954  &.896  & .902 &.912  &.919  &.785  & .793 & .806 & .812 & .932 & .935 &.942  &.956  \\
\textsc{Role2Vec}        &.958  &.972  &.975 &.972  & .944 &.953  &.952  &.952 &.918  &.929  &.927  &.922 &.812  &.812  &.807  &.802  & .957 &.965 &.961  & .959 \\
\textsc{NetMF}       &.913  &.938  &.956  &.965  &.865  &.904  & .910 &.919  & .815 & .839 & .852 & .855 & .719 &.724  &.742  &.743  & .814 &.841  & .858 & .876 \\
\midrule

\textsc{SLIM-Raa} & .970 & .975 & .974& .974&.953 & .956 & .959 &.961  & .920 & .929 & .934 & .935 & .827 & .815 &.816  &.821  &.964  & .968 & .970 & .971 \\
\textsc{MMSBM}  &.894  & .904 & .914 & .920 & .850 & .850 & .845 &.853  &.801  &.784  & .780 &.786  & .693 &.697  &.683  &.685  &  .812& .810 & .814 & .810 \\
\textsc{MNMF}       &.848  & .902 &.933  &.954  & .873 &.903  &.931  &.933  & .780 & .842 & .879 &.901  &.689  &.751  &.749  &.746  & .821 & .887 &.922  & .944 \\
\textsc{HM-LDM}  &.952  & .954 & .956 & .960 &.953 & .955 & .956 &.958  &.900  & .899 & .906 &.909  &.841  & .843 & .839 &.844  &.910  & .900 & .913 & .939 \\

\textsc{Simplex-Euclidean}  & .873 & .868 &  .869& .881 &.857 &.847  &.841  &.839  &  .765&.753  &.755  & .769 & .759 &.748  &.743  & .736 & .702 &.689  &.704  &.757  \\
\midrule
\textsc{AICoG (HB) SubComp}       & .968 &.975  &.978  &.980  & .961&.967  &.970  &.971  & .926 &.938  &.944  &.947  & .844 & .863 & .875 & .880 & .960 & .968 &.971  & .973 \\
\textsc{AICoG (HB)}      &.972  & .977 & .979 &.980  &.966 &.970  & .971 & .971 &.932  & .944 & .946 & .947 & .869 & .877 & .879 & .880 & .961 & .969 &.972  & .973 \\
\textsc{AICoG (LB)}      &.972  & .977 & .979 &.980  &.964 &.970  &.971  & .971 & .932 & .944 &.946  &.946  &.869  & .877 & .880 & .881 & .961 &.969  & .972 &.973  \\
\bottomrule
\end{tabular}
}
\end{table*}

\begin{table*}[h]
\caption{Macro-F1 scores for representation sizes of $8$, $16$, $32$, and $64$ averaged over five runs.}
\label{tab:macro}

\centering
\resizebox{0.5\textwidth}{!}{%
\begin{tabular}{l*{12}{c}}
\toprule
 & \multicolumn{4}{c}{\textsl{Cora}}
 & \multicolumn{4}{c}{\textsl{Citeseer}}
 & \multicolumn{4}{c}{\textsl{LastFM}} \\
\cmidrule(lr){2-5}\cmidrule(lr){6-9}\cmidrule(lr){10-13}
\multicolumn{1}{l}{Dimension ($D$)} &
  8 & 16 & 32 & 64 &
  8 & 16 & 32 & 64 &
  8 & 16 & 32 & 64 \\
\midrule
\textsc{Node2Vec}         & .755 &.761  &.785  &.801  & .503 &.565  &.593  & .632 &  .717& .779 & .786 &.794  \\
\textsc{Role2Vec}         & .764 & .781 & .786 & .791 & .572 & .606 & .604 & .623 &.720  & .771 &.775  &.775  \\

\textsc{NetMF} & .721 &.744  & .766 &.782  &.502  &.563  & .593 & .619 & .431 & .573 & .712 &.754  \\
\midrule
\textsc{Slim-Raa}   & .561 & .601 & .669 &.711  &.432  & .478 & .477 &.526  & .385 & .572 & .627 & .697 \\

\textsc{MMSBM}          & .125 & .279 & .459 &.567  &.259  &.354  & .445 & .468 &.091  & .244 &  .315&  .522\\

\textsc{MNMF}    & .460 & .591 & .662 &.703  & .337 & .400 & .485 & .545 &  .298& .467 & .626 & .714 \\

\textsc{HM-LDM}              &.676  & .700 & .762 & .797 & .513 & .511 & .566 & .611 & .604 & .668 & .705 &.740  \\

\textsc{Simplex-Euclidean}   & .369 &.370  & .445 & .386 & .338 & .304 & .325 & .360 & .165 & .211 &.282  & .360 \\
\midrule
\textsc{AICoG (HB) SubComp}  &  .648& .766 &  .803& .815 & .473 & .571 & .631 & .667 & .676 & .779 &.675  & .794 \\
\textsc{AICoG (HB)}          & .759 & .794 & .815 & .815 & .565 & .596 & .643 & .667 &.755  & .789 &.793  &.794  \\
\textsc{AICoG (LB)}          & .767 & .792  &.812   & .816 & .572 & .600 & .633 &.667  & .753 & .791 & .793 &.795  \\
\bottomrule
\end{tabular}
}
\end{table*}

\paragraph{Subcompositional robustness under random removals.}
We evaluate whether the learned representations remain predictive under subcomposition. Starting from learned simplex embeddings $\mathbf{x}_i\in\Delta^{K-1}$ (obtained by applying a softmax to the learned logits), we sample a random subset $S\subseteq\{1,\dots,K\}$ of size $|S|=K'$ (using multiple random seeds). For each sampled $S$, we form the subcomposition by dropping components outside $S$ and reclosing,
$\mathbf{x}_i^{(S)} = \mathbf{x}_{i,S}/\sum_{\ell\in S} x_{i\ell}$,
and compute ILR coordinates in the reduced simplex, $\mathbf{y}_i^{(S)}=\mathrm{ILR}(\mathbf{x}_i^{(S)})\in\mathbb{R}^{K'-1}$ (Helmert basis). We score candidate edges using the same latent-distance decoder,
$s_{ij}^{(S)} = -\alpha_S\|\mathbf{y}_i^{(S)}-\mathbf{y}_j^{(S)}\|_2 + \gamma_i + \gamma_j$,
and report AUC and AUPRC on a fixed evaluation set of positive and negative edges. To keep the distance and bias terms comparable across different $K'$, we optionally rescale distances via $\alpha_S$ using median-distance calibration on the evaluation pairs. Results are averaged over many random subsets $S$ for each $K'$.

\end{document}